%% 
%% Copyright 2007-2025 Elsevier Ltd
%% 
%% This file is part of the 'Elsarticle Bundle'.
%% ---------------------------------------------
%% 
%% It may be distributed under the conditions of the LaTeX Project Public
%% License, either version 1.3 of this license or (at your option) any
%% later version.  The latest version of this license is in
%%    http://www.latex-project.org/lppl.txt
%% and version 1.3 or later is part of all distributions of LaTeX
%% version 1999/12/01 or later.
%% 
%% The list of all files belonging to the 'Elsarticle Bundle' is
%% given in the file `manifest.txt'.
%% 
%% Template article for Elsevier's document class `elsarticle'
%% with harvard style bibliographic references

\documentclass[preprint,12pt,authoryear]{elsarticle}

%% Use the option review to obtain double line spacing
%% \documentclass[authoryear,preprint,review,12pt]{elsarticle}

%% Use the options 1p,twocolumn; 3p; 3p,twocolumn; 5p; or 5p,twocolumn
%% for a journal layout:
%% \documentclass[final,1p,times,authoryear]{elsarticle}
%% \documentclass[final,1p,times,twocolumn,authoryear]{elsarticle}
%% \documentclass[final,3p,times,authoryear]{elsarticle}
%% \documentclass[final,3p,times,twocolumn,authoryear]{elsarticle}
%% \documentclass[final,5p,times,authoryear]{elsarticle}
%% \documentclass[final,5p,times,twocolumn,authoryear]{elsarticle}

%% For including figures, graphicx.sty has been loaded in
%% elsarticle.cls. If you prefer to use the old commands
%% please give \usepackage{epsfig}
\usepackage{times}  % DO NOT CHANGE THIS
\usepackage{helvet}  % DO NOT CHANGE THIS
\usepackage{courier}  % DO NOT CHANGE THIS
\usepackage[hyphens]{url}  % DO NOT CHANGE THIS
\usepackage{booktabs}  % 提供三线表命令
\usepackage{amsthm}
\usepackage{makecell}
 % [section]表示按章节编号，如Lemma 2.1
\newtheorem{theorem}{Theorem}

\newtheorem{assumption}{Assumption}

\newtheorem{proposition}{Proposition}

\usepackage{algorithm}
\usepackage{algpseudocode}

\usepackage{amsmath}
\usepackage{amssymb}
\usepackage{multirow}

\usepackage{tikz}
\usetikzlibrary{arrows.meta,positioning,calc,fit,backgrounds,
                shapes.geometric,shapes.multipart,decorations.pathreplacing}
\tikzset{
  >={Latex[round]},
  line/.style={-Latex,thick},
  dline/.style={-Latex,thick,densely dashed},
  module/.style={draw,rounded corners,thick,align=left,inner sep=6pt,fill=white},
  io/.style={module,fill=gray!10},
  fusion/.style={module,fill=green!12},
  star/.style={module,fill=yellow!18},
  util/.style={module,fill=purple!10},
  update/.style={module,fill=blue!8},
  playersbox/.style={draw,rounded corners,thick,dashed,inner sep=8pt},
  small/.style={font=\scriptsize},
  title/.style={font=\bfseries}
}

%
% These are are recommended to typeset listings but not required. See the subsubsection on listing. Remove this block if you don't have listings in your paper.
\usepackage{newfloat}
\usepackage{listings}

\usepackage[table]{xcolor}  % 带table选项的xcolor宏包，支持单元格着色
\usepackage{tabularx} % 表格扩展支持

\usepackage{graphicx} % DO NOT CHANGE THIS
\urlstyle{rm} % DO NOT CHANGE THIS
  % DO NOT CHANGE THIS
\usepackage{natbib}  % DO NOT CHANGE THIS AND DO NOT ADD ANY OPTIONS TO IT
\usepackage{caption} % DO NOT CHANGE THIS AND DO NOT ADD ANY OPTIONS TO IT
%% The amssymb package provides various useful mathematical symbols
\usepackage{amssymb}
%% The amsmath package provides various useful equation environments.
\usepackage{amsmath}
%% The amsthm package provides extended theorem environments
%% \usepackage{amsthm}

%% The lineno packages adds line numbers. Start line numbering with
%% \begin{linenumbers}, end it with \end{linenumbers}. Or switch it on
%% for the whole article with \linenumbers.
%% \usepackage{lineno}

\journal{Expert Systems With Applications}

\begin{document}

\begin{frontmatter}

%% Title, authors and addresses

%% use the tnoteref command within \title for footnotes;
%% use the tnotetext command for theassociated footnote;
%% use the fnref command within \author or \affiliation for footnotes;
%% use the fntext command for theassociated footnote;
%% use the corref command within \author for corresponding author footnotes;
%% use the cortext command for theassociated footnote;
%% use the ead command for the email address,
%% and the form \ead[url] for the home page:
%% \title{Title\tnoteref{label1}}
%% \tnotetext[label1]{}
%% \author{Name\corref{cor1}\fnref{label2}}
%% \ead{email address}
%% \ead[url]{home page}
%% \fntext[label2]{}
%% \cortext[cor1]{}
%% \affiliation{organization={},
%%            addressline={}, 
%%            city={},
%%            postcode={}, 
%%            state={},
%%            country={}}
%% \fntext[label3]{}

\title{Curiosity Meets Cooperation: A Game-Theoretic Approach to Long-Tail Multi-Label Learning} %% Article title

%% use optional labels to link authors explicitly to addresses:
%% \author[label1,label2]{}
%% \affiliation[label1]{organization={},
%%             addressline={},
%%             city={},
%%             postcode={},
%%             state={},
%%             country={}}
%%
%% \affiliation[label2]{organization={},
%%             addressline={},
%%             city={},
%%             postcode={},
%%             state={},
%%             country={}}

\author{Canran Xiao\textsuperscript{1,*},
        Chuangxin Zhao\textsuperscript{2},
        Zong Ke\textsuperscript{3},
        Fei Shen\textsuperscript{4}} %% Author name

%% Author affiliation
\affiliation{\textsuperscript{1}% First affiliation
            organization={School of Cyber Science and Technology, Shenzhen Campus of Sun Yat-sen University},
            city={Shenzhen},
            postcode={518107}, 
            country={China}}

\affiliation{\textsuperscript{2}% Second affiliation  
            organization={Institute of Automation, Chinese Academy of Sciences},
            addressline={Institute of Automation}, 
            city={Beijing},
            postcode={100190}, 
            country={China}}

\affiliation{\textsuperscript{3}% Third affiliation
            organization={Department of Statistics and Data Science, National University of Singapore},
            addressline={National University of Singapore}, 
            postcode={119077}, 
            country={Singapore}}

\affiliation{\textsuperscript{4}% Fourth affiliation
            organization={NExT++ Research Centre, National University of Singapore}, 
            addressline={National University of Singapore},
            postcode={119077}, 
            country={Singapore}}

\cortext[1]{Corresponding author}
%\ead{xiaocanran999@gmail.com; zhaochuangxin2023@ia.ac.cn; a0129009@u.nus.edu; shenfei29@nus.edu.sg}

%% Abstract
\begin{abstract}
Long-tail imbalance is endemic to multi-label learning: a few head labels dominate the gradient signal, while the many rare labels that matter in practice are silently ignored.  We tackle this problem by casting the task as a cooperative potential game.  In our uriosity-Driven Game-Theoretic Multi-Label Learning (CD-GTMLL) framework, the label space is split among several cooperating ''players'' that share a global accuracy payoff yet earn additional curiosity rewards that rise with label rarity and inter-player disagreement.  These curiosity bonuses inject gradient on under-represented tags without hand-tuned class weights. We prove that gradient best-response updates ascend a differentiable potential and converge to tail-aware stationary points that tighten a lower bound on the expected \textit{Rare-F1}.  Extensive experiments on conventional benchmarks and three extreme-scale datasets show consistent state-of-the-art gains, delivering up to \(+4.3\%\) Rare-F1 and \(+1.6\%\) P@3 over the strongest baselines, while ablations reveal emergent division of labour and faster consensus on rare classes.  CD-GTMLL thus offers a principled, scalable route to long-tail robustness in multi-label prediction.
\end{abstract}

%% Keywords
\begin{keyword}
Multi-label classification \sep Long-tailed learning \sep Game-theoretic machine learning \sep Curiosity-driven learning \sep Cooperative ensemble fusion
\end{keyword}

\end{frontmatter}

%% Add \usepackage{lineno} before \begin{document} and uncomment 
%% following line to enable line numbers
%% \linenumbers

%% main text
%%

%% Use \section commands to start a section
\section{Introduction}
\label{sec:intro}
Multi-label classification (MLC)~\citep{zhang2013review,liu2021emerging,vergara2025multi} assigns multiple tags to each instance, powering applications from image recognition~\citep{zhao2021transformer,ma2024text,teng2025adaptive} and text categorization~\citep{maltoudoglou2022well,guo2021label} to functional genomics~\citep{du2022deep}. 
The per-label distribution is typically long-tailed~\citep{tarekegn2021review,de2024survey}: head labels dominate while tail labels appear sporadically. This imbalance is exacerbated in MLC because (i) co-occurring labels make resampling risky, and (ii) metrics like mAP favor head labels. 
As a result, standard optimizers~\citep{ridnik2021asymmetric} often learn head-biased boundaries, achieving high scores while failing on tail labels-problematic for safety-critical applications.

In practice the per-label sample counts follow a heavy-tailed distribution: a handful of head labels dominate the data, whereas the vast majority of tail labels appear only sporadically, as shown in Fig. \ref{fig:teaser}.  
This long-tail imbalance~\citep{tarekegn2021review,de2024survey} is particularly severe in the multi-label regime because (i) multiple labels co-occur within a single instance, so na\"ive resampling can destroy cross-label correlations, and (ii) evaluation metrics such as mAP or micro-F1 are disproportionately influenced by head labels, starving tail classes of gradient signal.  
Consequently, conventional optimizers ~\citep{ridnik2021asymmetric} that target average loss or accuracy often learn a head-biased decision boundary, yielding high headline scores while silently failing on the tail-an outcome that is unacceptable in safety-critical or comprehensive retrieval scenarios\citep{barandas2024evaluation}.

\begin{figure}[tb]
	\centering
	\includegraphics[width=0.8\linewidth]{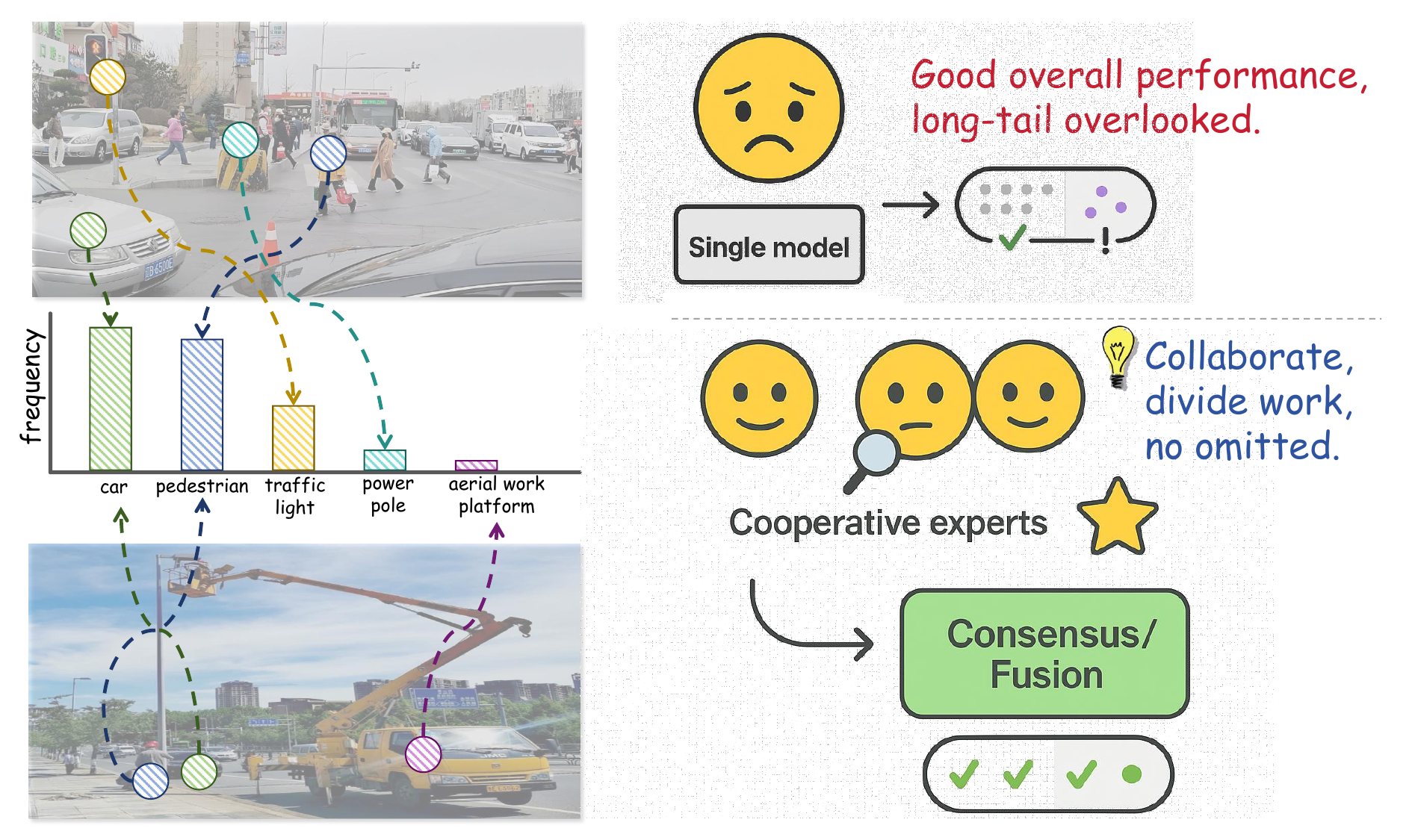}
	\caption{Real-world multi-label scenes are \emph{long-tailed}: a few frequent classes (e.g., \textit{car}, \textit{pedestrian}, \textit{traffic light}) dominate while many rare classes (e.g., \textit{power pole}, \textit{aerial work platform}) appear sparsely (left). A conventional single model may score well overall yet often overlooks these rare labels (top-right). Our approach replaces the monolith with a small set of cooperative experts that divide the work and then fuse their opinions into a single consensus prediction (bottom-right), improving tail coverage without sacrificing head performance.}
	\label{fig:teaser}
\end{figure}

Canonical imbalance remedies include  
(i) \emph{data rebalancing} via over/under-sampling or mix-up~\citep{liuclass,charte2015addressing,huang2021balancing,pan2024large},  
(ii) \emph{cost-sensitive objectives} such as class-balanced focal losses~\citep{zhao2023imbalanced,chen2022mcfl,zhang2024learning}, and  
(iii) \emph{label-space partition or ensemble} strategies~\citep{xu2019label,zou2024online}.  
However, these techniques falter when labels overlap: simplistic sampling distorts co-occurrence statistics~\citep{gao2024complementary,duarte2021plm}, re-weighting heuristics require brittle hyper-tunings, and global loss surrogates remain head-dominated.  
What is missing is a mechanism that \emph{persistently incentivizes exploration of tail labels while preserving the relational structure among labels}.

We view long-tail MLC as a \emph{cooperative multi-player game}.  
The label space is decomposed into overlapping subsets handled by distinct \emph{players} (sub-predictors).  
Each player maximizes a shared global payoff (overall multi-label accuracy) \emph{plus} an intrinsic \emph{curiosity reward} that grows with label rarity and inter-player disagreement.  
Framing the learning dynamics as a differentiable \emph{potential game}~\citep{fudenberg1991game} allows gradient-based best-response updates to converge to equilibria that jointly optimize head performance and tail discovery without manual re-weighting. The main contributions of this work are:

1)  We propose \textbf{Curiosity-Driven Game-Theoretic Multi-Label Learning} (\textbf{CD-GTMLL})-the first framework to cast long-tail MLC as a cooperative game equipped with a curiosity mechanism that systematically amplifies tail-label gradients. 

2) We provide theoretical guarantees that best-response updates ascend a global potential function encompassing both accuracy and curiosity, ensuring no player can ignore tail labels at equilibrium.

3) Extensive experiments on image and tabular benchmarks and their harder long-tail variants demonstrate consistent gains over state-of-the-art baselines.
Through ablations and game-behavior diagnostics, we elucidate how label-space decomposition induces specialized tail experts, accelerates rare-label correction, and yields interpretable multi-agent cooperation.

\section{Related Works}
\label{sec:related}

\subsection{Multi-label learning foundations}
A classical taxonomy distinguishes problem-transformation from algorithm-adaptation methods~\citep{tsoumakas2007overview,tsoumakas2010mining}. Transformation approaches reduce MLC to single-label problems: Binary Relevance (BR) trains one independent binary classifier per label, while Label Powerset (LP) treats each observed labelset as an atomic class; ensembles such as RAkEL randomly partition labels into $k$-sized subsets to balance tractability and dependency capture~\citep{read2011classifierchains,tsoumakas2010rakel,yao2024swift}. To better encode inter-label correlations without incurring LP's exponential blow-up, Classifier Chains (CC) pass earlier label predictions as features to later ones, with Ensembles of CC (ECC) improving robustness via random chain orders~\citep{read2011classifierchains}. For large label spaces, HOMER hierarchically partitions the output space to address scalability and class-imbalance issues~\citep{tsoumakas2008homer}. These foundations remain widely used baselines and inspire many modern variants\citep{yao2023ndc}.

Beyond transformations, MLC also adapts standard learners to multi-label outputs. A canonical example is ML-KNN, which augments $k$-NN with Bayesian label inference and remains a competitive non-neural baseline~\citep{zhang2007mlknn}. Neural precursors for text further showed that simple feed-forward architectures with sigmoid outputs already scale competitively on large corpora~\citep{nam2014large,zhang2025enhancing}.

With deep learning, many works explicitly learn label dependencies in images. CNN-RNN frameworks model co-occurrence and sequential dependencies between labels~\citep{wang2016cnnrnn}. Graph-based approaches (e.g., ML-GCN) propagate information over a label graph built from semantics or co-occurrence; adaptive variants learn the label graph itself~\citep{chen2019mlgcn,li2020agcn}. Transformers have become strong general-purpose MLC heads: C-Tran conditions attention on labels, while Query2Label uses label embeddings as decoder queries to `probe' class-specific evidence~\citep{lanchantin2021ctran,liu2021q2l}. These models achieve state-of-the-art results on balanced benchmarks but-without additional mechanisms-still risk head-label dominance under long-tail skew.

\subsection{Long-tail MLC}
\label{subsec:longtail_mlc}
Long-tailed label frequencies and inter-label co-occurrence make multi-label learning particularly fragile: head labels dominate the gradient and typical micro-averaged objectives reward head accuracy disproportionately, leaving rare labels under-trained. Prior work tackles this from several-largely orthogonal-angles.

A first line of work rescales contributions of positives/negatives or of rare/frequent labels. 
Distribution-Balanced Loss (DBL) corrects the negative-positive imbalance at the instance level and uses co-occurrence-aware weighting for multi-label data~\citep{wu2020distribution}. 
Asymmetric losses (ASL) suppress easy negatives and emphasize positives, yielding a strong off-the-shelf baseline for imbalanced MLC~\citep{ridnik2021asymmetric}. 
Further surrogates pursue instance- or batch-level reweighting and sampling (e.g., BalanceMix~\citep{song2024toward}, MLBOTE~\citep{teng2024multi}) or import ideas from single-label long-tail recognition-such as Focal/CB losses and margin/logit adjustments-to reduce head dominance and calibrate posterior scores~\citep{lin2017focal,cui2019classbalanced,cao2019ldam,menon2020logitadjustment}. 
While effective, these methods are typically driven by static heuristics (class priors, difficulty proxies) and treat labels mostly independently, making it hard to preserve cross-label structure under severe skew.

Another family modifies architectures or training dynamics to mitigate head-tail interference. 
Head-tail decoupling (e.g., LTMCP) separates representation learning from classifier adaptation across frequency regimes~\citep{yuan2019long}, HTTN leverages meta-transfer to transfer knowledge from head to tail classes~\citep{xiao2021does}. 
Decoupled training and two-stage schedules-learning features on imbalanced data then re-balancing the classifier-are common in long-tailed recognition and have been adapted to multi-label settings~\citep{kang2020decoupling}. 
In dense prediction, gradient-gating losses (Equalization/Seesaw) down-weight overwhelming negatives from frequent classes~\citep{tan2021eqlv2,wang2021seesaw}. Similar ideas can be ported to MLC but remain instance-level and static. 
Overall, architecture/schedule designs help but still lack an explicit mechanism that persistently steers capacity toward rare labels over training.

Data-side approaches synthesize or transplant statistics across head/tail classes, or regularize representations toward well-separated prototypes. 
LSFA transplants label-specific feature statistics to bolster tails~\citep{xu2023label}, MLC-NC encourages equiangular tight frame (ETF) structure for class prototypes to stabilize tails~\citep{tao2025mlc}. 
Contrastive and prototype-driven formulations (e.g., PaCo) have also proved effective for long-tail recognition by coupling uniformity with class-aware separation and can be combined with multi-label heads~\citep{cui2022paco}. 
These strategies improve feature geometry yet still rely on fixed priors/schedules rather than adaptive exploration of the tail.

When $L$ is in the tens of thousands, scalability becomes central. 
Transformer-based rankers (e.g., XR-Transformer) and label hierarchies (e.g., MatchXML) push state-of-the-art efficiency/accuracy, while ETU proposes generalized training objectives for extreme settings~\citep{zhang2021fast,ye2024matchxml,schultheis2023generalized}. 
Earlier XMC lines include hierarchical trees (Parabel/Bonsai), 1-vs-rest linear models with sparsity (DiSMEC), and deep label-tree models such as AttentionXML/LightXML~\citep{prabhu2018parabel,khandagale2020bonsai,babbar2017dismec,you2019attentionxml,jiang2021lightxml}. 
Despite strong micro-precision, these systems still optimize global surrogates and usually require additional heuristics to improve tail recall.

Across the spectrum-losses, architectures, data/representations, and XMC-the dominant paradigm relies on static reweighting/schedules or global surrogates. 
They may lift tail metrics, but they neither capture inter-label cooperation nor provide a persistent, adaptive drive toward under-represented labels. 
In contrast, our formulation casts long-tail MLC as a cooperative game with a curiosity signal that (i) continually redistributes learning pressure toward rare labels and (ii) preserves cross-label structure through overlapping players and consensus fusion. 
This dynamic treatment complements prior static remedies and explains our consistent Rare-F1 gains alongside stable head performance.

\subsection{Game-theoretic machine learning}
\label{subsec:gtml}
Game-theoretic views shaped a broad swath of machine learning, from adversarial training and GANs to multi-agent reinforcement learning (MARL)~\citep{yang2020overview,goodfellow2020generative,albrecht2024multi}. 
Beyond zero-sum settings, recent work studies differentiable $n$-player games and their gradient dynamics, decomposing vector fields into symmetric/antisymmetric parts and analyzing convergence or cycling under simultaneous updates~\citep{balduzzi2018mechanics,mescheder2018convergence}. 
On the cooperative side, potential games provide a powerful template: whenever each agent's unilateral improvement increases a shared potential, simple best-response or coordinate-ascent procedures provably ascend that potential~\citep{monderer1996potential}. 
This perspective underlies several representation-learning and coordination mechanisms in ML, including recent cooperative formulations for learning shared embeddings~\citep{slumbers2023game}.

In MARL, intrinsic motivation and reward shaping are classical tools to encourage exploration or coordination when extrinsic rewards are sparse or biased~\citep{ng1999policy,devlin2012potential,pathak2017curiosity,foerster2018counterfactual}. 
However, most shaping signals are instance- or state-centric and do not directly target label-space sparsity in supervised multi-label classification.

We cast long-tail MLC itself as a \emph{differentiable cooperative potential game}: overlapping sub-predictors (`players') share a global utility while receiving a \emph{curiosity} signal that persistently increases attention to rare labels. 
Unlike re-weighting or augmentation (\S\ref{subsec:longtail_mlc}), our formulation provides an adaptive, game-theoretic mechanism for tail exploration together with a convergence-friendly potential, yielding rare-label gains without sacrificing head performance.

\begin{figure}[th]
	\centering
	\includegraphics[width=\linewidth]{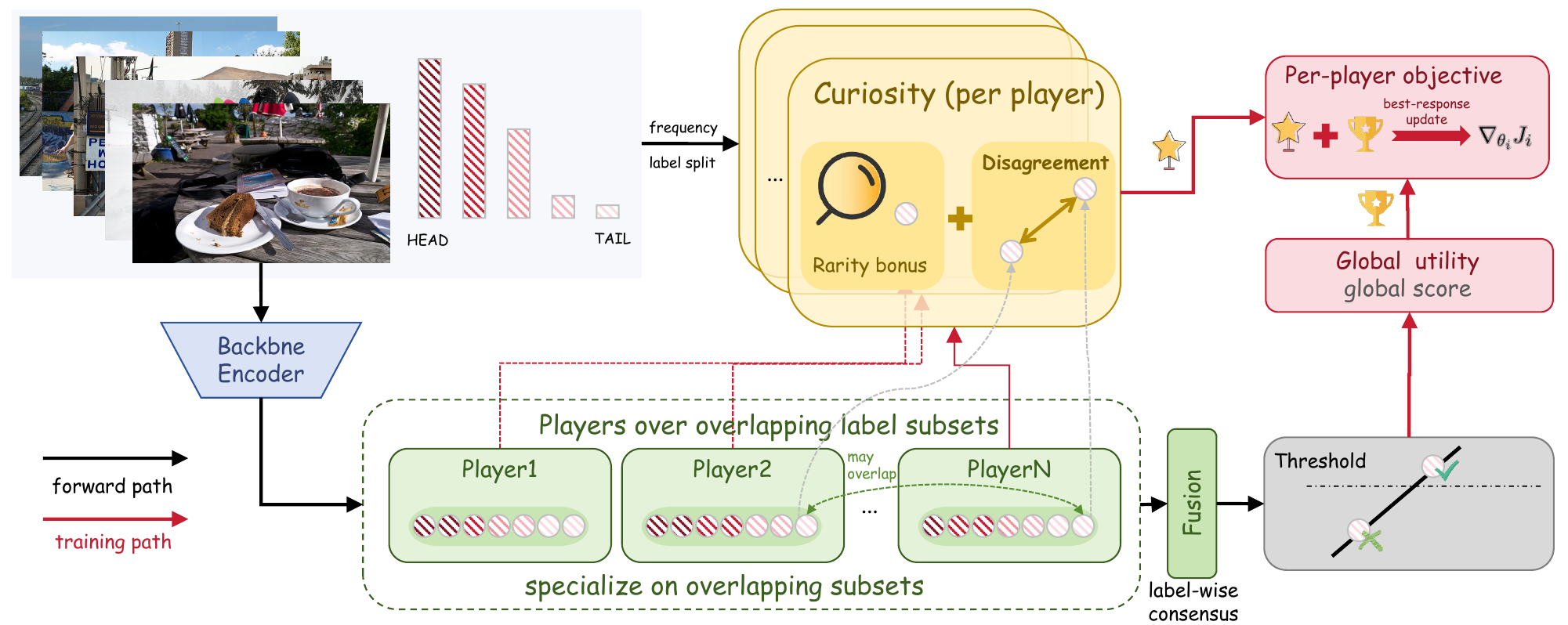}
	\caption{\textbf{CD--GTMLL pipeline.}  Long-tailed data are encoded by a shared Backbone into features $h(x)$.  In the forward path (black arrows), $h(x)$ is sent to multiple Players, each specializing on an overlapping label subset $\mathcal L_i$ and producing posteriors $\pi_i$.  A label-wise Fusion $g(\{\pi_i\})$ yields $\hat{\mathbf p}$, which is Thresholded to predictions $\hat{\mathbf y}$.  In the training path (red arrows), each player receives a Curiosity signal-rarity bonus (focus on tail labels) plus disagreement with peers-producing $C_i(x)$.  A Global utility $R$ scores $(\hat{\mathbf y},\mathbf y)$; the Per-player objective $J_i=R+\alpha\,\mathbb{E}[C_i]$ drives a cyclic best-response update of $\theta_i$.  The cooperative game encourages division of labor across overlapping subsets and improves tail coverage while preserving head performance.}
	\label{fig:pipeline}
\end{figure}

\section{Method}
\label{sec:method}
We formalize Curiosity-Driven Game-Theoretic Multi-Label Learning (CD-GTMLL), a cooperative framework that continuously steers learning toward the \emph{tail} of a long-tailed label distribution by coupling multiple predictors with a curiosity signal.  We first define the long-tail setting, then describe our $N$-player decomposition and global fusion scheme. The pipeline of CD-GTMLL is shown in Fig. \ref{fig:pipeline}.

\subsection{Long-Tail Multi-Label Formulation}

\paragraph{Notation}
Let $\mathcal{X}\!\subset\!\mathbb{R}^{d}$ denote the input space and $\mathcal{L}=\{1,\dots,L\}$ the label set.  
For an instance $\mathbf{x}\in\mathcal{X}$ the ground-truth annotation is a binary vector $\mathbf{y}\in\{0,1\}^{L}$, where $y_\ell=1$ indicates the presence of label~$\ell$.  
The data distribution over pairs $(\mathbf{x},\mathbf{y})$ is $\mathcal{D}$.

\paragraph{Head-tail split}
Empirically, label frequencies follow a power law.  Let
\begin{equation}
    \text{freq}(\ell)=\Pr_{(\mathbf{x},\mathbf{y})\sim\mathcal{D}}\bigl[y_\ell=1\bigr]
\end{equation}
be the marginal prevalence of label~$\ell$.  
We sort $\{\text{freq}(\ell)\}_{\ell=1}^{L}$ in descending order and define a threshold (e.g., top 10 \% cumulative frequency) to partition labels into the \emph{head} $\mathcal{L}_{\text{H}}$ and the \emph{tail} $\mathcal{L}_{\text{T}}=\mathcal{L}\setminus\mathcal{L}_{\text{H}}$.  
Throughout, tail labels refer to any $\ell\in\mathcal{L}_{\text{T}}$ and typically satisfy $\text{freq}(\ell)\!\ll\!\text{freq}(\ell')$ for $\ell'\in\mathcal{L}_{\text{H}}$.

\paragraph{$N$-player label decomposition}
Rather than employ a monolithic predictor, we assign $N$ cooperating players to overlapping label subsets
\begin{equation}
    \bigl\{\mathcal{L}_1,\dots,\mathcal{L}_N\bigr\},\qquad 
    \bigcup\nolimits_{i=1}^{N}\mathcal{L}_i=\mathcal{L},
\end{equation}
allowing $\mathcal{L}_i\cap\mathcal{L}_j\!\neq\!\varnothing$ to provide redundancy on difficult labels.  
Player~$i$ is parameterized by $\theta_i\!\in\!\Theta_i$ and outputs posterior probabilities
$
    \pi_i\bigl(\mathbf{x};\theta_i\bigr)\in[0,1]^{|\mathcal{L}_i|}.
$
For $\ell\!\in\!\mathcal{L}_i$, the component $[\pi_i(\mathbf{x};\theta_i)]_{\ell}$ estimates $\Pr(y_\ell=1\mid\mathbf{x})$ according to player~$i$.

\paragraph{Global fusion}
The ensemble prediction aggregates all players via a differentiable fusion operator
\begin{equation}
\label{eq:fusion}
    \hat{\mathbf{p}}
    \;=\;
    g\!\Bigl(\{\pi_i(\mathbf{x};\theta_i)\}_{i=1}^{N}\Bigr)\in[0,1]^{L},
\end{equation}
where a common choice is the weighted average
\begin{equation}
\hat{p}_\ell=\sum_{i:\,\ell\in\mathcal{L}_i}\omega_{i,\ell}\,[\pi_i(\mathbf{x};\theta_i)]_{\ell},\;
    \sum_{i:\,\ell\in\mathcal{L}_i}\omega_{i,\ell}=1.
\end{equation}
A binary decision is obtained by thresholding: $\hat{y}_\ell=\mathbf{1}\{\hat{p}_\ell>\tau\}$, with a constant or label-adaptive $\tau$.

\subsection{Game-Theoretic Formulation}
\label{sec:method:game}

\paragraph{Cooperative objective}
We model CD-GTMLL as an $N$-player cooperative game in which every player shares the same payoff  
\begin{equation}
\label{eq:global_payoff}
   R\!\bigl(\{\theta_i\}\bigr)
   \;=\;
   \mathbb{E}_{(\mathbf{x},\mathbf{y})\sim\mathcal{D}}
   \!\Bigl[
        \mathcal{M}\!\bigl(\hat{\mathbf{y}},\mathbf{y}\bigr)
   \Bigr],
\end{equation}
where $\mathcal{M}$ is a differentiable surrogate of a multi-label score (e.g., soft F1 or logistic loss).  Because tail labels occur rarely, their gradients are typically down-weighted when optimizing~\eqref{eq:global_payoff}; our curiosity term (introduced in \S\ref{sec:method:curiosity}) counteracts this bias.

\begin{assumption}[Continuity \& compactness~\citep{kawaguchi2016deep,zhang2020gradientdice}]
\label{assump:continuity}
(i) Each parameter set $\Theta_i$ is non-empty, compact, and convex.
(ii) $R(\{\theta_i\})$ is continuous on the product space $\prod_{i=1}^{N}\Theta_i$.
(iii) $\mathcal{M}$ is \emph{tail-responsive}: any increase in a tail label's accuracy yields a (possibly small) increase in $\mathcal{M}$.
\end{assumption}

% --- place this after Assumption~\ref{assump:continuity} ---
\begin{assumption}[Local improvability \& bounded predictions]
\label{assump:improvability}
(\textbf{i}) (\emph{Bounded predictions / clipping}) There exists $\varepsilon\in(0,\tfrac12)$ such that fused probabilities satisfy
$\hat p_\ell(\mathbf x)\in[\varepsilon,\,1-\varepsilon]$ almost surely.
(\textbf{ii}) (\emph{Local improvability on tail labels}) For every tail label $\ell\in\mathcal L_T$, any parameter profile $\boldsymbol\theta$, and any measurable set
$S\subset\{\mathbf x:\,y_\ell=1\}$ with $\Pr(S)>0$, there exists a player $k$ with $\ell\in\mathcal L_k$ and a feasible direction
$\mathbf v_k\in T_{\theta_k}\Theta_k$ such that the one-sided Gâteaux derivative of the fused probability along $\mathbf v_k$ obeys
$
\dot p_\ell(\mathbf x)\triangleq \left.\tfrac{\mathrm d}{\mathrm dt}\right|_{t=0^+}
\hat p_\ell(\mathbf x;\theta_1,\ldots,\theta_k+t\mathbf v_k,\ldots,\theta_N)\ge c_0\,\mathbf 1_S(\mathbf x)
$
for some $c_0>0$, and $\dot p_j(\mathbf x)\equiv 0$ for all $j\neq \ell$.
\end{assumption}

\begin{theorem}[Existence and tail-awareness]
\label{thm:existence_tail}
Under Assumptions~\ref{assump:continuity}--\ref{assump:improvability}, the following hold for the cooperative payoff in~\eqref{eq:global_payoff}:
\begin{enumerate}
\item (\emph{Existence}) $R$ admits a global maximizer $(\theta_1^\star,\ldots,\theta_N^\star)\in\prod_i\Theta_i$.
\item (\emph{Nash}) Every global maximizer is a pure-strategy Nash equilibrium.
\item (\emph{Tail-awareness}) For the rarity-weighted logistic utility $\mathcal M(\hat{\mathbf y},\mathbf y)=\tfrac{1}{Z}\sum_{\ell} w_\ell\big[y_\ell\log\hat p_\ell+(1-y_\ell)\log(1-\hat p_\ell)\big]$
with $w_\ell>0$, any global maximizer cannot systematically ignore tail labels:
for each $\ell\in\mathcal L_T$ with $\Pr(y_\ell=1)>0$ and any $\tau\in(\varepsilon,1-\varepsilon)$,
$\Pr\!\big(y_\ell=1\land \hat p_\ell^\star>\tau\big)>0$.
\end{enumerate}
\end{theorem}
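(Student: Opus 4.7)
For existence I will invoke Weierstrass: Assumption~\ref{assump:continuity}(i) makes the strategy space $\prod_{i=1}^{N}\Theta_i$ a compact subset of Euclidean space, and part~(ii) makes $R$ continuous, so $R$ attains a global maximizer $\theta^\star=(\theta_1^\star,\dots,\theta_N^\star)$. For the Nash claim I will use the identical-interest (team) structure of the game: since every player shares the same payoff $R$, global maximality of $\theta^\star$ immediately yields $R(\theta_k,\theta_{-k}^\star)\le R(\theta^\star)$ for every player $k$ and every $\theta_k\in\Theta_k$, which is exactly the pure-strategy best-response condition.

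\textbf{Part (3), the main argument.} Here I will argue by contradiction. Fix $\ell\in\mathcal L_T$ with $\Pr(y_\ell=1)>0$ and $\tau\in(\varepsilon,1-\varepsilon)$, and suppose $\Pr(y_\ell=1\land\hat p_\ell^\star>\tau)=0$. I then define the ``bad'' set $S:=\{\mathbf x : y_\ell(\mathbf x)=1\text{ and }\hat p_\ell^\star(\mathbf x)\le\tau\}\subset\{y_\ell=1\}$, which under the contradiction hypothesis has $\Pr(S)=\Pr(y_\ell=1)>0$. I apply Assumption~\ref{assump:improvability}(ii) at $\theta^\star$ with this $\ell,S$ to extract a player $k$ (with $\ell\in\mathcal L_k$) and a feasible direction $v_k\in T_{\theta_k^\star}\Theta_k$ such that the one-sided Gâteaux derivative $\dot p_\ell$ of the fused probability along $v_k$ satisfies $\dot p_\ell\ge c_0\mathbf 1_S$ and $\dot p_j\equiv 0$ for $j\neq\ell$. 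Bounded predictions (Assumption~\ref{assump:improvability}(i)) then dominate $|\partial_{\hat p_\ell}\mathcal M|\le w_\ell/(Z\varepsilon)$ a.s., so dominated convergence lets me exchange limit and expectation to obtain $\left.\tfrac{dR}{dt}\right|_{t=0^+}=\tfrac{w_\ell}{Z}\,\mathbb E\!\Bigl[\bigl(\tfrac{y_\ell}{\hat p_\ell^\star}-\tfrac{1-y_\ell}{1-\hat p_\ell^\star}\bigr)\,\dot p_\ell\Bigr]$. On $S$, where $y_\ell=1$, $\hat p_\ell^\star\le\tau$, and $\dot p_\ell\ge c_0$, the integrand is at least $c_0/\tau$, giving a contribution of at least $w_\ell c_0\Pr(S)/(Z\tau)>0$. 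A strictly positive directional derivative contradicts the global maximality of $\theta^\star$, closing the argument.

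\textbf{Main obstacle.} The delicate step will be ensuring that the positive contribution on $S$ is not cancelled by an offsetting first-order loss on $\{y_\ell=0\}$, where $\dot p_\ell$ could a priori be positive and push $\hat p_\ell$ away from the correct zero. The cleanest resolution is to read Assumption~\ref{assump:improvability}(ii) as providing a \emph{localized} perturbation, i.e., $\dot p_\ell\equiv 0$ off $S$---the natural meaning of ``local'' improvability---so that the off-$S$ term vanishes outright. Failing that, I would exploit convexity of $\Theta_k$ to symmetrize the direction, passing to $\tfrac12(v_k-v_k')$ for a companion direction with matching off-$S$ drift but reversed sign, or compose $v_k$ with a smooth mask supported on $S$ when the player's parameterization permits. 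Under either reading the strict inequality $\left.\tfrac{dR}{dt}\right|_{t=0^+}>0$ persists, and the contradiction---hence tail-awareness---is secured.
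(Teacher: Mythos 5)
Your proof follows the paper's Appendix~A argument essentially step for step: Weierstrass on the compact product space for existence, the identical-payoff observation for the Nash claim, and a contradiction via a strictly positive one-sided directional derivative $R'(0^+)\ \ge\ \tfrac{w_\ell c_0}{Z\tau}\Pr(S)\ >\ 0$ obtained from Assumption~\ref{assump:improvability}(ii) plus dominated convergence for tail-awareness. The ``main obstacle'' you flag---a possible offsetting negative first-order contribution on $\{y_\ell=0\}$, where $\dot p_\ell\ge 0$ multiplies the negative factor $-\tfrac{1}{1-\hat p_\ell^\star}$---is a genuine subtlety that the paper's own inequality chain in Appendix~A.3 silently elides (it is valid only under your ``localized perturbation'' reading, $\dot p_\ell\equiv 0$ off $S$), so your explicit treatment of that point makes the write-up, if anything, more careful than the original.
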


\begin{proof}[Proof sketch]
(1) \emph{Existence:} $\Theta=\prod_i\Theta_i$ is compact and nonempty; $R$ is continuous by Assumption~\ref{assump:continuity}, hence attains a maximum (Weierstrass).  
(2) \emph{Nash:} identical-payoff game: any unilateral deviation of player $i$ cannot increase the global maximum value $R(\boldsymbol\theta^\star)$; thus $\boldsymbol\theta^\star$ is a pure NE.  
(3) \emph{Tail-awareness:} suppose by contradiction that at a maximizer there exists a tail label $\ell$ and threshold $\tau\in(\varepsilon,1-\varepsilon)$ such that $\hat p_\ell^\star\le\tau$ almost surely on $\{y_\ell=1\}$. By Assumption~\ref{assump:improvability}(ii) there is a feasible direction that increases $\hat p_\ell$ on a positive-measure subset $S\subset\{y_\ell=1\}$. Differentiating the rarity-weighted logistic objective along this direction (bounded by (i) and using dominated convergence) yields a strictly positive one-sided directional derivative:
\[
R'(0^+)=\mathbb E\!\Big[\tfrac{w_\ell}{Z}\big(\tfrac{y_\ell}{\hat p_\ell^\star}-\tfrac{1-y_\ell}{1-\hat p_\ell^\star}\big)\dot p_\ell\Big]
\;\ge\; \tfrac{w_\ell c_0}{Z\tau}\Pr(S) \;>\; 0,
\]
contradicting optimality. Hence no equilibrium can be tail-blind. For the complete proof, please refer to \ref{AP:A}. \qedhere
\end{proof}

\subsection{Curiosity-Driven Exploration for Tail Labels}
\label{sec:method:curiosity}

\paragraph{Curiosity reward}
To continuously steer learning toward the tail, each player receives a \emph{curiosity} bonus that (i) emphasizes correctness on infrequent labels and (ii) encourages useful diversity with peers.
\begin{equation}
\label{eq:curiosity_smooth}
\begin{split}
C_i(\mathbf x) = &\underbrace{\sum_{\ell\in\mathcal L_i}\frac{1}{1+\mathrm{freq}(\ell)}
\big[y_\ell\log p_{i\ell}(\mathbf x)+(1-y_\ell)\log\!\big(1-p_{i\ell}(\mathbf x)\big)\big]}_{\text{rarity{\text-}weighted log-likelihood}} \\
&\quad + \beta\,D\!\big(\pi_i(\mathbf x),\,\overline{\pi}_{-i}(\mathbf x)^{\text{stop}}\big),
\end{split}
\end{equation}
where $p_{i\ell}=[\pi_i]_\ell\in(0,1)$ is the player-$i$ posterior for label $\ell$ and 
$D(\cdot\|\cdot)$ is a divergence (e.g., KL) computed on overlapping labels; the superscript ``$\text{stop}$'' indicates a stop-gradient so that $C_i$ depends on $\theta_i$ only, preserving the potential-game property.  The coefficient $\beta\!\ge\!0$ trades off rarity emphasis and disagreement.

\paragraph{Per-player objective}
With curiosity, player $i$ maximizes
\begin{equation}
\label{eq:player_obj}
J_i(\theta_i)=R(\{\theta_j\})+\alpha\,\mathbb E_{\mathbf x\sim\mathcal D}\!\big[C_i(\mathbf x)\big],\qquad \alpha>0.
\end{equation}

\paragraph{Potential function}
To relate block-wise improvements to a single global scalar, we define the potential
\begin{equation}
\label{eq:potential}
   \Phi(\{\theta_i\})
   \;=\;
   R\bigl(\{\theta_i\}\bigr)
   \;+\;
   \alpha \sum_{i=1}^{N}
   \mathbb{E}_{\mathbf{x}\sim\mathcal{D}}\!\bigl[C_i(\mathbf{x})\bigr].
\end{equation}
Because $C_i$ is computed with a stop-gradient peer average (\S\ref{sec:method:curiosity}), it depends only on $\theta_i$, hence
$\nabla_{\theta_i}\Phi \equiv \nabla_{\theta_i}J_i$ with $J_i$ in~\eqref{eq:player_obj}.

\begin{proposition}[Curiosity prioritizes tail labels]
\label{prop:rare_priority}
Assume Assumption~\ref{assump:improvability}(i) (bounded predictions) and let $\tau\in(\varepsilon,1-\varepsilon)$ be the decision threshold.  
Fix any tail label $\ell\in\mathcal L_T$ and let 
$S=\{\mathbf x: y_\ell=1,\ \hat p_\ell(\mathbf x)\le\tau\}$ be the set of \emph{tail false negatives}.  
If $\Pr(S)>0$, then there exists a player $k$ with $\ell\in\mathcal L_k$ such that the partial derivative of $J_k$ in the logit direction $z_{k\ell}$ (with $p_{k\ell}=\sigma(z_{k\ell})$) satisfies
\[
\frac{\partial J_k}{\partial z_{k\ell}}
\ \ge\ 
\alpha\,\frac{1}{1+\mathrm{freq}(\ell)}\,\varepsilon\,\Pr(S)
\;>\;0.
\]
Consequently, cyclic best responses cannot stagnate while a tail label has a nonzero mass of false negatives.
\end{proposition}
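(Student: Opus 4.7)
The plan is to pick a player $k$ via an averaging argument on the fusion weights, differentiate $J_k$ in the logit direction $z_{k\ell}$, and isolate the contribution of the rarity-weighted log-likelihood on the tail false-negative set $S$ to recover the claimed bound; the stagnation consequence then follows by contraposition.

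First, from $\hat p_\ell=\sum_{i:\ell\in\mathcal L_i}\omega_{i,\ell}\,p_{i\ell}$ and $\hat p_\ell\le\tau$ on $S$, integrating gives $\sum_i\omega_{i,\ell}\mathbb E[p_{i\ell}\mathbf 1_S]\le\tau\Pr(S)$. Since the non-negative fusion weights sum to one, pigeonhole yields an owning player $k$ with $\mathbb E[p_{k\ell}\mathbf 1_S]\le\tau\Pr(S)$, so
\[
\mathbb E[(1-p_{k\ell})\mathbf 1_S]\;\ge\;(1-\tau)\Pr(S)\;\ge\;\varepsilon\Pr(S),
\]
using $\tau<1-\varepsilon$ from Assumption~\ref{assump:improvability}(i). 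Next, differentiating the rarity-weighted logistic term of $C_k$ with $dp_{k\ell}/dz_{k\ell}=p_{k\ell}(1-p_{k\ell})$ and the sigmoid chain rule yields
\[
\frac{\partial}{\partial z_{k\ell}}\mathbb E\!\left[\frac{y_\ell\log p_{k\ell}+(1-y_\ell)\log(1-p_{k\ell})}{1+\mathrm{freq}(\ell)}\right]=\frac{\mathbb E[y_\ell-p_{k\ell}]}{1+\mathrm{freq}(\ell)}.
\]
Lower-bounding this expectation by integrating only over $S$ (where $y_\ell=1$) gives at least $\varepsilon\Pr(S)/(1+\mathrm{freq}(\ell))$, and multiplying by $\alpha$ produces precisely the curiosity contribution in the claim.

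The main obstacle is ensuring that the remaining pieces of $\partial J_k/\partial z_{k\ell}$—the rarity-weighted logistic on $\{y_\ell=0\}\cup(S^{c}\cap\{y_\ell=1\})$, the gradient of $R$ through the fusion, and the peer-disagreement divergence $D$—do not swallow the $S$-mass. The fusion gradient $\partial\hat p_\ell/\partial z_{k\ell}=\omega_{k,\ell}p_{k\ell}(1-p_{k\ell})$ is non-negative, and tail-responsiveness (Assumption~\ref{assump:continuity}(iii)) makes $R$'s contribution non-negative on tail positives; the stop-gradient freezes $\bar\pi_{-k}$, so $D$'s gradient on $S$ aligns with the log-likelihood push because $p_{k\ell}\le\tau$ sits below the frozen peer average. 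The delicate piece is the $\{y_\ell=0\}$ contribution $-p_{k\ell}/(1+\mathrm{freq}(\ell))$, which I would absorb by refining the pigeonhole step to select $k$ that also minimizes $\mathbb E[p_{k\ell}\mathbf 1_{\{y_\ell=0\}}]$: for a tail label this quantity is of order $\mathrm{freq}(\ell)$ under any reasonably calibrated fusion, hence negligible relative to $\varepsilon\Pr(S)$.

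Finally, the stagnation corollary is immediate by contraposition: if cyclic best-response reached $\partial J_k/\partial z_{k\ell}=0$ simultaneously for every $(k,\ell)$, then the strict bound above forces $\Pr(S_\ell)=0$ for every tail label $\ell$, meaning no tail positive can remain below the threshold $\tau$ after fusion at such a fixed point.
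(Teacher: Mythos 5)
Your core computation reproduces the paper's argument in \ref{AP.B}: the sigmoid chain rule reduces the derivative of the rarity-weighted log-likelihood to $\mathbb E[y_\ell-p_{k\ell}]/(1+\mathrm{freq}(\ell))$, the margin $1-p_{k\ell}\ge\varepsilon$ on $S$ supplies the factor $\varepsilon\Pr(S)$, and the remaining pieces of $J_k$ are argued to be droppable. Your pigeonhole step through the fusion weights is a genuine (if minor) refinement --- it derives $\mathbb E[(1-p_{k\ell})\mathbf 1_S]\ge(1-\tau)\Pr(S)\ge\varepsilon\Pr(S)$ from the fused bound alone, whereas the paper simply asserts $p_{k\ell}\le 1-\varepsilon$ for the individual player from Assumption~\ref{assump:improvability}(i), which formally only constrains the fused probability. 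The contrapositive for the stagnation claim also matches.

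The genuine gap is exactly the step you flag as delicate, and your proposed repair does not close it. The full derivative of the rarity term is $\mathbb E[y_\ell-p_{k\ell}]/(1+\mathrm{freq}(\ell))$, and on $\{y_\ell=0\}$ the integrand is $-p_{k\ell}<0$; for a tail label $\Pr(y_\ell=0)\approx 1$ while $\Pr(S)\le\Pr(y_\ell=1)=\mathrm{freq}(\ell)$, so this negative mass can dominate $\varepsilon\Pr(S)$ outright (e.g., under two-sided clipping it is at least $\varepsilon\Pr(y_\ell=0)$ in magnitude). Your fix --- select $k$ so that $\mathbb E[p_{k\ell}\mathbf 1_{\{y_\ell=0\}}]$ is ``of order $\mathrm{freq}(\ell)$ under any reasonably calibrated fusion'' --- fails twice: no calibration assumption exists in the paper, and even granting it, a term of order $\mathrm{freq}(\ell)$ is \emph{not} negligible relative to $\varepsilon\Pr(S)\le\varepsilon\,\mathrm{freq}(\ell)$; the two are the same order at best. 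Your claim that the $D$-gradient ``aligns with the log-likelihood push'' is also unsupported: the curiosity term \emph{rewards} divergence from the frozen peer average, so if $p_{k\ell}$ already sits below $\overline{\pi}_{-k}$ the disagreement gradient pushes it further down. For what it is worth, the paper's own proof makes the same silent move --- it lower-bounds the full expectation $\mathbb E[y_\ell-p_{k\ell}]$ by its restriction to $S$ and drops the $R$- and $D$-terms ``for a lower bound'' without verifying that what is dropped is nonnegative --- so you have correctly located the weak point of the argument; but as written neither your proof nor the paper's establishes the displayed inequality for the full partial derivative $\partial J_k/\partial z_{k\ell}$, as opposed to its $S$-restricted contribution.
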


\begin{proof}[Proof sketch]
On $S$ we have $y_\ell=1$ and, by bounded predictions, $p_{k\ell}\le 1-\varepsilon$.  
For the rarity term in~\eqref{eq:curiosity_smooth}, 
$\partial\,[y\log p+(1-y)\log(1-p)]/\partial z = y-p$; hence on $S$,
$\partial C_k/\partial z_{k\ell}\ge \tfrac{1}{1+\mathrm{freq}(\ell)}(1-p_{k\ell})\ge \tfrac{\varepsilon}{1+\mathrm{freq}(\ell)}$.  
Taking expectation over $S$ and multiplying by $\alpha$ gives the stated lower bound; 
the $D$-term is not needed for positivity. Refer to \ref{AP.B} for the complete proof.\qedhere
\end{proof}

\section{Learning Algorithm}
In this section, we describe how to solve the cooperative game defined in \S\ref{sec:method} via a best-response-style iterative procedure.

\subsection{Algorithmic Framework}
\label{sec:method:algo}

\paragraph{Best-response update}
Fixing the parameters of all other players, player~$i$ solves
\begin{equation}
\label{eq:best_resp}
   \theta_i^{\star}
   =
   \mathop{\mathrm{arg\,max}}_{\theta_i \in \Theta_i}
   \Bigl\{
       R\bigl(\{\theta_j\}_{j \neq i}, \theta_i\bigr)
       + \alpha \, \mathbb{E}_{\mathbf{x} \sim \mathcal{D}} \bigl[C_i(\mathbf{x})\bigr]
   \Bigr\},
\end{equation}
which we approximate by \emph{one} gradient-ascent (or a short inner loop):
\begin{equation}
\label{eq:grad_step}
   \theta_i
   \gets
   \theta_i
   +
   \eta_i
   \nabla_{\theta_i}
   \Bigl[
       R\bigl(\{\theta_j\}, \theta_i\bigr)
       + \alpha \,\mathbb{E}_{\mathbf{x}\sim\mathcal D}[C_i(\mathbf x)]
   \Bigr],
\end{equation}
where the gradient is taken with peers \(\{\theta_{j\ne i}\}\) \emph{detached} inside $C_i$ and the fusion $g$ (cf.\ \eqref{eq:curiosity_smooth}).

\paragraph{Cyclic best-response}
Players are updated sequentially \(i=1\!\to\!2\!\to\!\cdots\!\to\!N\!\to\!1\!\cdots\).
This is a cyclic coordinate ascent on the joint potential (\S\ref{sec:method:conv}). In practice we use Adam and \texttt{inner\_iters}=1, which is stable and efficient.

\begin{algorithm}[t]
\caption{\textsc{CD--GTMLL Training}}
\label{alg:cdgtmll}
\begin{algorithmic}[1]
\Require Dataset $\mathcal D$; learning rates $\{\eta_i\}_{i=1}^N$; curiosity hyperparams $(\alpha,\beta)$; fusion weights $\omega$
\Ensure $\theta_0,\{\theta_i\}_{i=1}^N,\omega$
\State Precompute $\mathrm{freq}(\ell)$ and head/tail split; init backbone $\theta_0$, heads $\{\theta_i\}$, fusion $\omega$
\While{not converged}
  \State Sample minibatch $\mathcal B=\{(\mathbf x^m,\mathbf y^m)\}_{m=1}^B$
  \State \textbf{Shared forward:} compute features $\mathbf h(\mathbf x;\theta_0)$ for $\mathbf x\in\mathcal B$
  \State \textbf{Players:} for $j=1,\ldots,N$, get posteriors $p_j(\mathbf x)=\pi_j(\mathbf x;\theta_j)$
  \State Cache detached copies $p_j^{\perp}$ \Statex \(\triangleright\) peers are treated as \emph{stop-grad} inside curiosity
  \State \textbf{Fuse:} $\hat{\mathbf p}\leftarrow g(\{p_j\},\omega)$ \emph{via} \eqref{eq:fusion}
  \State \textbf{Batch utility:} $R_{\mathcal B}\leftarrow \mathcal M(\hat{\mathbf p},\mathbf y)$ \emph{via} \eqref{eq:global_payoff}
  \For{$i=1,\ldots,N$} \Statex \(\triangleright\) cyclic best response on player $i$
    \State Build peer avg.\ $\overline p_{-i}^{\perp}$ from $\{p_j^{\perp}\}_{j\neq i}$ (overlapping labels)
    \State \textbf{Curiosity:} $C_i(\mathbf x)$ \emph{via} smooth form \eqref{eq:curiosity_smooth} using $p_i$ and $\overline p_{-i}^{\perp}$
    \State \textbf{Player obj.:} $J_i \leftarrow R_{\mathcal B}+\alpha\,\mathbb E_{\mathbf x\in\mathcal B}[C_i(\mathbf x)]$ \emph{via} \eqref{eq:player_obj}
    \State \textbf{Update head:} $\theta_i \leftarrow \theta_i + \eta_i\,\nabla_{\theta_i} J_i$ \Statex \(\triangleright\) treat $\theta_{j\neq i},\omega,\theta_0$ as constants here
  \EndFor
  \State \textbf{Backbone/fusion step:} ascend $\nabla_{\theta_0,\omega}\Phi$ using \eqref{eq:potential}
\EndWhile
\State \Return $\theta_0,\{\theta_i\}_{i=1}^N,\omega$
\end{algorithmic}
\end{algorithm}

\paragraph{Computational complexity}
Let $C_{\text{bb}}$ be the cost of one backbone forward per sample and $C_{\text{head}}$ the cost of one head forward/backward for a \emph{single} binary label.\footnote{When heads are linear classifiers on fixed features, $C_{\text{head}}$ is tiny compared to $C_{\text{bb}}$.}
One outer iteration with cached features costs
\begin{equation}
\mathcal O\!\Big(B\,C_{\text{bb}}\Big)\;+\;
\mathcal O\!\Big(C_{\text{head}}\sum_{i=1}^N|\mathcal L_i|\Big)\;+\;
\mathcal O\!\Big(\sum_{i=1}^N|\mathcal O_i|\Big),
\end{equation}
where $\mathcal O_i=\{\ell\in\mathcal L_i:\exists j\neq i,\ \ell\in\mathcal L_j\}$ denotes overlapping labels (for the $D$ term and fusion).  
With mild overlap $\sum_i|\mathcal L_i|=\Theta(mL)$ and $\sum_i|\mathcal O_i|=\Theta(\rho L)$ for an average coverage factor $m\approx 1+\rho$ ($\rho\ll 1$).  
Thus each sweep remains \emph{linear} in $L$ (plus one backbone forward), matching the order of a monolithic model while improving tail coverage.  

\paragraph{Inference procedure}
At test time, each player produces $p_i(\mathbf x_{\mathrm{te}})$, we fuse to $\hat{\mathbf p}\in[0,1]^L$ by \eqref{eq:fusion} and threshold $\hat y_\ell=\mathbf 1\{\hat p_\ell>\tau_\ell\}$.  
Because curiosity already induces tail specialization during training, no extra calibration is needed. Inference is one forward per player plus light fusion.

\subsection{Convergence Analysis}
\label{sec:method:conv}

Recall the potential \eqref{eq:potential}, 
because $C_i$ uses a stop-gradient peer average (\S\ref{sec:method:curiosity}), it depends only on $\theta_i$. Hence
\(
\nabla_{\theta_i}\Phi \equiv \nabla_{\theta_i}J_i
\)
with $J_i$ from \eqref{eq:player_obj}. A best-response (block) step that increases $J_i$ thus never decreases $\Phi$.

\begin{assumption}[Block smoothness \& boundedness]
\label{assump:reg}
For each player $i$:
(i) $\Phi$ is continuously differentiable on the compact product set $\Theta=\prod_{j=1}^N\Theta_j$; 
(ii) the block gradient is $L_i$-Lipschitz, i.e.,
$\|\nabla_{\theta_i}\Phi(\theta_1,\!\ldots,\theta_i+\Delta,\!\ldots,\theta_N)-\nabla_{\theta_i}\Phi(\boldsymbol\theta)\|\le L_i\|\Delta\|$;
(iii) the block stepsizes satisfy $0<\eta_i\le \tfrac{1}{L_i}$ (or Armijo backtracking is used);
(iv) $\Phi$ is bounded above on $\Theta$.
\end{assumption}

\begin{theorem}[Convergence of cyclic block ascent]
\label{thm:conv}
Under Assumption~\ref{assump:reg}, the iterates produced by Alg.~\ref{alg:cdgtmll} (one gradient-ascent step per block in cyclic order) generate a monotone non-decreasing sequence $\{\Phi^{(t)}\}_{t\ge 0}$ that is bounded above. Consequently $\Phi^{(t)}$ converges, and every limit point $\boldsymbol\theta^\infty$ is a first-order stationary point of $\Phi$, i.e., $\nabla_{\theta_i}\Phi(\boldsymbol\theta^\infty)=\mathbf 0$ for all $i$.
\end{theorem}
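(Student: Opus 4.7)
The plan is to leverage the stop-gradient structure together with Assumption~\ref{assump:reg} to reduce Algorithm~\ref{alg:cdgtmll}'s cyclic best-response update to a standard cyclic block gradient ascent on the single scalar potential $\Phi$, and then close the loop with a descent-lemma / compactness argument.

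First, I would record the gradient identity $\nabla_{\theta_i}\Phi(\boldsymbol\theta)=\nabla_{\theta_i}J_i(\boldsymbol\theta)$ already noted in the preamble to the theorem: since $C_j$ reads peer posteriors as stop-gradient averages, the only term in $\alpha\sum_{j}\mathbb E[C_j]$ whose derivative with respect to $\theta_i$ survives is $\mathbb E[C_i]$. Consequently one inner-loop step in Alg.~\ref{alg:cdgtmll} executes exactly $\theta_i\leftarrow\theta_i+\eta_i\nabla_{\theta_i}\Phi(\boldsymbol\theta)$, so the joint iteration can be analyzed as cyclic block gradient ascent on a single differentiable objective.

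Second, I would apply the block ascent lemma supplied by Assumption~\ref{assump:reg}(ii): Lipschitz continuity of the block gradient yields
\[
\Phi(\boldsymbol\theta^{(t+1)})\;\ge\;\Phi(\boldsymbol\theta^{(t)})+\bigl(\eta_{i(t)}-\tfrac{L_{i(t)}\eta_{i(t)}^2}{2}\bigr)\,\|\nabla_{\theta_{i(t)}}\Phi(\boldsymbol\theta^{(t)})\|^2,
\]
where $i(t)$ is the block touched at iteration $t$; with $\eta_i\le 1/L_i$ (or Armijo backtracking) the coefficient is at least $\eta_{i(t)}/2$. Hence $\{\Phi^{(t)}\}$ is monotone non-decreasing, and by Assumption~\ref{assump:reg}(iv) it is bounded above, so it converges. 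Telescoping the sufficient-ascent inequality then delivers the summability
\[
\sum_{t\ge 0}\eta_{i(t)}\,\|\nabla_{\theta_{i(t)}}\Phi(\boldsymbol\theta^{(t)})\|^2\;<\;\infty,
\]
so that along the subsequence of iterations at which block $i$ is updated, $\|\nabla_{\theta_i}\Phi(\boldsymbol\theta^{(t)})\|\to 0$ for every $i$.

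Third, I would lift ``vanishing block gradients along each block's own update subsequence'' to ``every limit point is stationary''. Compactness of $\Theta$ (Assumption~\ref{assump:reg}(i)) guarantees that any subsequence admits a convergent sub-subsequence $\boldsymbol\theta^{(t_k)}\to\boldsymbol\theta^\infty$. The summability above implies $\|\boldsymbol\theta^{(t+1)}-\boldsymbol\theta^{(t)}\|=\eta_{i(t)}\|\nabla_{\theta_{i(t)}}\Phi(\boldsymbol\theta^{(t)})\|\to 0$, i.e.\ consecutive iterates become asymptotically identical, so shifting the time index by any fixed offset $r\in\{0,\ldots,N-1\}$ preserves the limit: $\boldsymbol\theta^{(t_k+r)}\to\boldsymbol\theta^\infty$. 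Evaluating the $i$-th block gradient at the offset at which block $i$ is updated within the sweep and using continuity of $\nabla\Phi$ yields $\nabla_{\theta_i}\Phi(\boldsymbol\theta^\infty)=\mathbf 0$ for all $i$.

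I expect the main obstacle to be precisely this last ``index-alignment'' step: the descent lemma only controls the gradient of the block currently being touched, so without the asymptotic-Cauchy observation one cannot conclude that \emph{all} block gradients vanish simultaneously at the same limit point; a further subtlety is that Armijo backtracking requires a uniform lower bound on the accepted stepsize along the orbit, which follows from compactness of $\Theta$ and continuity of $\nabla\Phi$. Everything else-monotone convergence of $\{\Phi^{(t)}\}$, existence of limit points, and the ascent inequality itself-is bookkeeping around Assumption~\ref{assump:reg}.
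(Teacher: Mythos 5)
Your proof is correct and follows essentially the same route as the paper's Appendix~C: the block ascent lemma, telescoping to obtain summability of the weighted block-gradient norms, vanishing block gradients, and continuity of $\nabla\Phi$ at limit points. In fact your index-alignment step (using $\|\boldsymbol\theta^{(t+1)}-\boldsymbol\theta^{(t)}\|\to 0$ to shift the convergent subsequence onto each block's own update times) is \emph{more} careful than the paper's step~C.4, which simply asserts that a further subsequence of the convergent subsequence can be taken inside each $\mathcal T_i$ --- a claim that is not justified as written and is exactly what your asymptotic-Cauchy argument repairs.
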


\begin{proof}[Proof sketch]
Let $\boldsymbol\theta^{(t)}$ be the iterate before updating block $i$. By block-$L_i$ smoothness (Descent/Ascent Lemma) and the update $\Delta=\eta_i\nabla_{\theta_i}\Phi(\boldsymbol\theta^{(t)})$,
\begin{equation}
\Phi\big(\boldsymbol\theta^{(t)}+\mathbf e_i\Delta\big)
\;\ge\;
\Phi\big(\boldsymbol\theta^{(t)}\big)
\;+\;\eta_i\!\left(1-\tfrac{L_i\eta_i}{2}\right)\!\big\|\nabla_{\theta_i}\Phi(\boldsymbol\theta^{(t)})\big\|^2,
\end{equation}
which is $\ge \Phi(\boldsymbol\theta^{(t)})$ since $\eta_i\!\le\!1/L_i$. Summing the guaranteed gains over one sweep yields a non-decreasing, bounded-above sequence $\{\Phi^{(t)}\}$; hence $\Phi^{(t)}$ converges and $\sum_t\|\nabla_{\theta_{i_t}}\Phi(\boldsymbol\theta^{(t)})\|^2<\infty$. Because each block is visited infinitely often, the block gradients must vanish along a subsequence, and by continuity they vanish at every limit point, proving stationarity. A detailed proof is provided in \ref{AP.C}. \qedhere
\end{proof}

%===================== Main text (short + sketch) =====================
\subsection{Relationship Between the Objective and \textit{Rare\,-\,F1}}
\label{sec:method:metric}

While the potential~\eqref{eq:potential} is optimized indirectly, evaluation ultimately relies on tail-sensitive F1.  
We formalize the link for the \emph{micro-averaged} tail F1 at threshold $\tau\in(0,1)$:
\begin{equation}
\widetilde F_{\mathrm T}(\tau)\;\triangleq\;\frac{2\,\mu_{\mathrm{TP},\mathrm T}(\tau)}{2\,\mu_{\mathrm{TP},\mathrm T}(\tau)+\mu_{\mathrm{FP},\mathrm T}(\tau)+\mu_{\mathrm{FN},\mathrm T}(\tau)},
\end{equation}
where $\mu_{\mathrm{TP},\mathrm T},\mu_{\mathrm{FP},\mathrm T},\mu_{\mathrm{FN},\mathrm T}$ are the micro counts aggregated across tail labels and then averaged over the data distribution.
Let $\mu_{\mathrm{Pos},\mathrm T}=\sum_{\ell\in\mathcal L_{\mathrm T}}\Pr(y_\ell=1)$ be the tail positive mass, $w_{\min,\mathrm T}=\min_{\ell\in\mathcal L_{\mathrm T}} w_\ell$, and $Z=\sum_{\ell=1}^L w_\ell$ the normalizer in~\eqref{eq:global_payoff}.  
We adopt $\varepsilon$-clipping as in Assumption~\ref{assump:improvability}(i) so that $\hat p_\ell\in[\varepsilon,1-\varepsilon]$.

\begin{theorem}[Lower bound on micro Rare--F1]
\label{thm:microF1_bound}
Define $\kappa(\tau)=\max\!\big\{1/[-\log(1-\tau)],\,1/[-\log\tau]\big\}$.  
Then, for the rarity-weighted logistic utility in~\eqref{eq:global_payoff}, the micro tail F1 satisfies
\begin{equation}
\label{eq:microF1_main_bound}
\widetilde F_{\mathrm T}(\tau)\;\ge\;
1\;-\;\frac{\kappa(\tau)\,Z}{2\,\mu_{\mathrm{Pos},\mathrm T}\,w_{\min,\mathrm T}}\;\bigl(-R(\{\theta_i\})\bigr).
\end{equation}
\end{theorem}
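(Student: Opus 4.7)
The plan is to chain three ingredients: (i) a rarity-weighted cross-entropy lower bound that isolates the tail contribution of $-R$; (ii) a counting argument that converts that cross-entropy mass into an upper bound on the tail error mass $\mu_{\mathrm{FN},\mathrm T}+\mu_{\mathrm{FP},\mathrm T}$; and (iii) the elementary micro-F1 identity that lifts an error-mass bound to a bound on $\widetilde F_{\mathrm T}(\tau)$.

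First I would peel off the tail portion of the utility. Because every $w_\ell>0$, dropping head terms and then factoring the minimal tail weight gives
\begin{equation*}
Z\,\bigl(-R(\{\theta_i\})\bigr)
\;=\;\sum_{\ell}w_\ell\,\mathbb E\bigl[\mathrm{BCE}_\ell\bigr]
\;\ge\;w_{\min,\mathrm T}\sum_{\ell\in\mathcal L_{\mathrm T}}\mathbb E\bigl[\mathrm{BCE}_\ell\bigr],
\end{equation*}
where $\mathrm{BCE}_\ell=-y_\ell\log\hat p_\ell-(1-y_\ell)\log(1-\hat p_\ell)$. The $\varepsilon$-clipping of Assumption~\ref{assump:improvability}(i) keeps each logarithm finite, so expectations and sums exchange freely.

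Next I would lower-bound the tail BCE by the threshold errors it charges. On a tail false negative ($y_\ell=1,\hat p_\ell\le\tau$) the per-sample contribution is $-\log\hat p_\ell\ge -\log\tau$; on a tail false positive ($y_\ell=0,\hat p_\ell>\tau$) it is $-\log(1-\hat p_\ell)\ge -\log(1-\tau)$. Writing $1/\kappa(\tau)=\min\{-\log\tau,\,-\log(1-\tau)\}$ and taking expectations yields
\begin{equation*}
\sum_{\ell\in\mathcal L_{\mathrm T}}\mathbb E\bigl[\mathrm{BCE}_\ell\bigr]\;\ge\;\frac{1}{\kappa(\tau)}\bigl(\mu_{\mathrm{FN},\mathrm T}+\mu_{\mathrm{FP},\mathrm T}\bigr),
\end{equation*}
which combined with the previous display gives the key error estimate $\mu_{\mathrm{FN},\mathrm T}+\mu_{\mathrm{FP},\mathrm T}\le \kappa(\tau)\,Z\,(-R)/w_{\min,\mathrm T}$.

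Finally I would invoke the micro-F1 identity
\begin{equation*}
1-\widetilde F_{\mathrm T}(\tau)\;=\;\frac{\mu_{\mathrm{FN},\mathrm T}+\mu_{\mathrm{FP},\mathrm T}}{2\mu_{\mathrm{TP},\mathrm T}+\mu_{\mathrm{FP},\mathrm T}+\mu_{\mathrm{FN},\mathrm T}}\;=\;\frac{\mu_{\mathrm{FN},\mathrm T}+\mu_{\mathrm{FP},\mathrm T}}{\mu_{\mathrm{Pos},\mathrm T}+\mu_{\mathrm{Pred},\mathrm T}},
\end{equation*}
lower-bound the denominator by $2\mu_{\mathrm{Pos},\mathrm T}$, and substitute the error bound to reach~\eqref{eq:microF1_main_bound}. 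The hard part is this last denominator step: unconditionally one only has $\mu_{\mathrm{Pos},\mathrm T}+\mu_{\mathrm{Pred},\mathrm T}\ge\mu_{\mathrm{Pos},\mathrm T}$, which would yield the looser constant $1/(\mu_{\mathrm{Pos},\mathrm T}w_{\min,\mathrm T})$. Upgrading to the factor-of-two constant requires $\mu_{\mathrm{Pred},\mathrm T}\ge\mu_{\mathrm{Pos},\mathrm T}$ (equivalently $\mu_{\mathrm{FP},\mathrm T}\ge\mu_{\mathrm{FN},\mathrm T}$), which I would justify at any stationary point of the potential $\Phi$ via Proposition~\ref{prop:rare_priority}: whenever a tail label carries nonzero FN mass, a strictly positive rarity-weighted logit gradient persists, so equilibrium predictions tilt toward positives on the tail. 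Formalising this tail-prediction inflation cleanly---and ensuring it still holds along the trajectory actually evaluated---is the delicate part; absent it, the same chain gives~\eqref{eq:microF1_main_bound} with constant $1/(\mu_{\mathrm{Pos},\mathrm T}w_{\min,\mathrm T})$ in place of $1/(2\mu_{\mathrm{Pos},\mathrm T}w_{\min,\mathrm T})$.
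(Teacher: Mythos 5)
Your proposal follows the same route as the paper's Appendix~D: peel off the tail contribution of $-R$ and factor out $w_{\min,\mathrm T}$ (the paper's \eqref{eqD:tailcontrib}--\eqref{eqD:St_bound}), convert the tail cross-entropy mass into an upper bound on $\mu_{\mathrm{FP},\mathrm T}+\mu_{\mathrm{FN},\mathrm T}$ via the same logistic-tail threshold inequalities and the same $\kappa(\tau)$ (\eqref{eqD:thr0}--\eqref{eqD:sumfpfn}), and finish with the micro-F1 identity \eqref{eqD:F1rewrite}. Steps (i) and (ii) of your chain are correct and match the paper's essentially line for line.

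The step you hesitate over is, however, exactly the step worth hesitating over. The paper passes from $1-\tfrac{\mu_{\mathrm{FP},\mathrm T}+\mu_{\mathrm{FN},\mathrm T}}{2\mu_{\mathrm{Pos},\mathrm T}+\mu_{\mathrm{FP},\mathrm T}-\mu_{\mathrm{FN},\mathrm T}}$ to $1-\tfrac{\mu_{\mathrm{FP},\mathrm T}+\mu_{\mathrm{FN},\mathrm T}}{2\mu_{\mathrm{Pos},\mathrm T}}$ without comment, and as you observe this requires $\mu_{\mathrm{FP},\mathrm T}\ge\mu_{\mathrm{FN},\mathrm T}$, which does not hold unconditionally --- indeed a tail-conservative predictor, the very regime the theorem targets, typically has the opposite ordering. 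So you have identified an unstated assumption in the published argument rather than a defect specific to your own. Your proposed repair via Proposition~\ref{prop:rare_priority} does not quite close it: that proposition gives a strictly positive logit derivative \emph{while} tail FN mass persists, which at best forces $\mu_{\mathrm{FN},\mathrm T}=0$ at exact stationary points under Assumption~\ref{assump:improvability}(ii), and says nothing about the non-stationary iterates along which the bound is advertised to tighten monotonically. The clean resolutions are the two you name: either take your unconditional fallback (bound the denominator below by $\mu_{\mathrm{Pos},\mathrm T}$ only, losing a factor of two in the constant), or state $\mu_{\mathrm{FP},\mathrm T}\ge\mu_{\mathrm{FN},\mathrm T}$ as an explicit hypothesis of the theorem.
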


\paragraph{Proof sketch}
(i) For any $\tau\!\in\!(\varepsilon,1-\varepsilon)$, the threshold events are controlled by the logistic tails:
$\mathbf 1\{\hat p_\ell\!\ge\!\tau,y_\ell\!=\!0\}\le \frac{-\log(1-\hat p_\ell)}{-\log(1-\tau)}\mathbf 1\{y_\ell\!=\!0\}$ and
$\mathbf 1\{\hat p_\ell\!<\!\tau,y_\ell\!=\!1\}\le \frac{-\log \hat p_\ell}{-\log\tau}\mathbf 1\{y_\ell\!=\!1\}$, taking expectations gives micro $\mathrm{FP}$/$\mathrm{FN}$ bounds.  
(ii) The sum of tail-conditioned logistic losses is upper-bounded (up to a constant) by $-R$ via the weights $w_\ell$ in~\eqref{eq:global_payoff}.  
(iii) Using $\widetilde F_{\mathrm T}=1-\frac{\mu_{\mathrm{FP},\mathrm T}+\mu_{\mathrm{FN},\mathrm T}}{2\mu_{\mathrm{Pos},\mathrm T}+\mu_{\mathrm{FP},\mathrm T}-\mu_{\mathrm{FN},\mathrm T}}\ge 1-\frac{\mu_{\mathrm{FP},\mathrm T}+\mu_{\mathrm{FN},\mathrm T}}{2\mu_{\mathrm{Pos},\mathrm T}}$ and collecting the bounds yields~\eqref{eq:microF1_main_bound}.  
A complete proof with all inequalities is provided in Appendix~D.
\medskip

\paragraph{Implication}
Any ascent in $R$ (hence in the potential $\Phi$ by \eqref{eq:potential}) tightens the lower bound~\eqref{eq:microF1_main_bound}, providing a direct guarantee that optimizing the cooperative objective improves micro Rare--F1.  Choosing $\tau$ to minimize $\kappa(\tau)$ (e.g., $\tau\!=\!1/2$ under clipping) strengthens the constant; using $R_{\mathrm T}$ tightens it further.

\section{Experiments}
\subsection{Experimental Setup}
\label{subsec:exp_setup}
\paragraph{Datasets}
\textbf{(i) Conventional MLC} Pascal VOC 2007 (\(9\,{,}963\) images, 20 labels), MS-COCO 2014 (\(82\,{,}081\) / \(40\,{,}504\) train/val images, 80 labels), Yeast (\(2\,{,}417\) instances, 14 labels) and Mediamill (\(43\,{,}907\) videos, 101 labels).  \textbf{(ii) Rare-focused MLC.} To stress tail robustness we create Pascal VOC-R, COCO-R, Yeast-R and Mediamill-R by down-sampling positives of the least-frequent labels (details in Appendix.E.3).
\textbf{(iii) Extreme MLC.} Eurlex-4K (55k EU legal documents, 3\,984 labels), Wiki10-31K (20k Wikipedia articles, 30\,938 labels) and AmazonCat-13K (1.2M Amazon product descriptions, 13\,330 labels).  Image datasets use standard random crop-flip augmentation; tabular/text data are tokenised and $\ell_2$-normalised. 

\paragraph{Evaluation metrics}
For conventional MLC we report mean Average Precision (mAP), Micro-/Macro-F1 and \textit{Rare-F1} computed on the bottom 20\% of labels.  For XMC we follow the literature and use Precision@k (\(k\!\in\!\{1,3,5\}\)). The detailed definitions of these evaluation metrics are provided in Appendix.E.4.

\paragraph{Baselines}
(1) \textit{Binary-cross-entropy:} BCE.  
(2) \textit{Direct F-measure optimisers:} Plugin-Estimator (PE) \citep{koyejo2015consistent}, Surrogate RB (SRB) \citep{kotlowski2016surrogate}.  
(3) \textit{Long-tail specific:} DBL \citep{wu2020distribution}, LSFA \citep{xu2023label}, MLC-NC \citep{tao2025mlc}.  
(4) \textit{Imbalance losses:} ASL \citep{ridnik2021asymmetric}, BalanceMix \citep{song2024toward}, MLBOTE \citep{teng2024multi}.  
(5) \textit{Extreme MLC:} XR-Transformer \citep{zhang2021fast}, MatchXML \citep{ye2024matchxml}, ETU \citep{schultheis2023generalized}.  
(6) \textit{Generic multi-label models:} CC, ML-GCN, C-Tran, ML-Decoder, LCIFS.

Refer to Appendix.E for more implementation details.

\begin{table*}[t]
\centering
\resizebox{\textwidth}{!}{ % 添加这一行来调整表格宽度
\begin{tabular}{lcccccccc}
\toprule
\multirow{2}{*}{Method} &
\multicolumn{2}{c}{Pascal VOC} &
\multicolumn{2}{c}{COCO} &
\multicolumn{2}{c}{Yeast} &
\multicolumn{2}{c}{Mediamill}\\
\cmidrule(lr){2-3}\cmidrule(lr){4-5}\cmidrule(lr){6-7}\cmidrule(lr){8-9}
 & mAP & Rare-F1 & mAP & Rare-F1 & Micro-F1 & Rare-F1 & Macro-F1 & Rare-F1\\
\midrule
BCE                 & 88.9 & 71.0 & 63.0 & 41.2 & 75.4 & 64.8 & 49.3 & 35.4\\
PE~\citep{koyejo2015consistent}           & 90.5 & 75.0 & 65.1 & 44.9 & 77.1 & 66.0 & 51.7 & 37.6\\
SRB~\citep{kotlowski2016surrogate}        & 90.7 & 75.4 & 65.5 & 45.3 & 77.5 & 66.4 & 52.0 & 37.9\\
DBL~\citep{wu2020distribution}            & 91.0 & 76.0 & 66.0 & 45.9 & 77.9 & 67.0 & 52.8 & 38.6\\
LSFA~\citep{xu2023label}                  & 91.3 & 77.0 & 66.2 & 46.2 & 78.2 & 67.3 & 53.1 & 39.0\\
MLC-NC~\citep{tao2025mlc}                 & \underline{91.9} & \underline{78.4} & 66.9 & 47.0 & \underline{79.2} & 69.0 & 55.2 & 41.5\\
ASL~\citep{ridnik2021asymmetric}          & 91.6 & 77.4 & 66.5 & 46.4 & 78.6 & 68.0 & 54.5 & 40.3\\
BalanceMix~\citep{song2024toward}         & 91.5 & 77.1 & 66.4 & 46.1 & 78.4 & 67.8 & 54.4 & 40.1\\
MLBOTE~\citep{teng2024multi}              & 89.6 & 72.9 & 63.4 & 42.3 & 76.5 & 66.1 & 49.7 & 36.2\\
CC~\citep{read2011classifier}             & 89.7 & 73.5 & 63.8 & 42.6 & 76.7 & 66.3 & 50.6 & 37.0\\
ML-GCN~\citep{chen2019multi2}             & 90.9 & 75.9 & 66.0 & 45.7 & 78.0 & 67.5 & 53.2 & 39.2\\
C-Tran~\citep{lanchantin2021general2}     & 91.2 & 76.3 & 66.2 & 46.0 & 78.1 & 67.6 & 53.9 & 39.8\\
ML-Decoder~\citep{ridnik2023ml}           & 91.8 & 77.5 & \underline{66.8} & 46.7 & 78.9 & 68.2 & 55.0 & 40.8\\
LCIFS~\citep{fan2024learning}             & 91.3 & 77.7 & 67.0 & \underline{47.2} & 78.4 & 67.5 & \underline{55.3} & \underline{40.9}\\
\textbf{CD--GTMLL}                        & \textbf{92.7} & \textbf{79.2} &
\textbf{68.4} & \textbf{49.2} &
\textbf{80.3} & \textbf{70.2} &
\textbf{56.1} & \textbf{42.8}\\
\bottomrule
\end{tabular}
} % 添加这一行闭合 resizebox
\caption{\textbf{Standard--frequency results} (\%). Rare-F1 is computed on the bottom 20 \% labels.  Best and second best are \textbf{bold} and \underline{underlined}.}
\label{tab:main_standard}
\end{table*}

\begin{figure}[htbp] % 这里需要位置参数，如htbp
    \centering
    \includegraphics[width=0.65\linewidth]{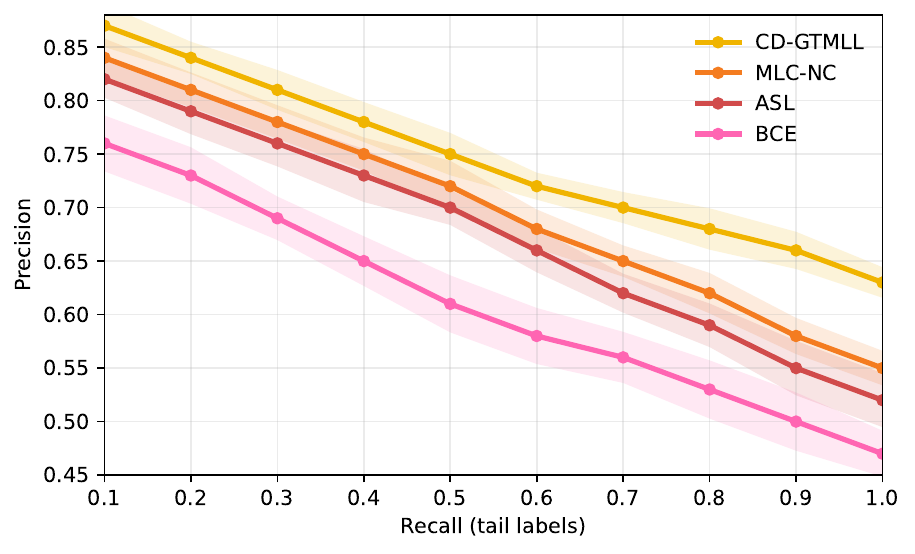} % 
    \caption{Precision-recall curves on \emph{Pascal VOC-R}.}
    \label{fig:pr_tail}
\end{figure}

\begin{table*}[t]
\centering
\scriptsize
\resizebox{\textwidth}{!}{ % 添加这一行来调整表格宽度
\begin{tabular}{lcccccccc}
\toprule
\multirow{2}{*}{Method} &
\multicolumn{2}{c}{Pascal VOC-R} &
\multicolumn{2}{c}{COCO-R} &
\multicolumn{2}{c}{Yeast-R} &
\multicolumn{2}{c}{Mediamill-R}\\
\cmidrule(lr){2-3}\cmidrule(lr){4-5}\cmidrule(lr){6-7}\cmidrule(lr){8-9}
& @30\% & @50\% & @30\% & @40\% & @40\% & @50\% & @40\% & @50\%\\
\midrule
BCE                         & 64.8 & 57.3 & 38.7 & 35.2 & 61.9 & 58.0 & 30.9 & 26.7\\
PE~\citep{koyejo2015consistent}   & 66.5 & 58.7 & 40.1 & 36.6 & 63.0 & 59.0 & 32.0 & 27.4\\
SRB~\citep{kotlowski2016surrogate}& 67.0 & 59.2 & 40.6 & 37.1 & 63.4 & 59.4 & 32.5 & 27.9\\
DBL~\citep{wu2020distribution}    & 68.0 & 60.2 & 41.5 & 37.9 & 64.1 & 60.1 & 33.2 & 28.6\\
LSFA~\citep{xu2023label}          & 68.7 & 61.0 & 42.1 & 38.6 & 64.6 & 60.6 & 33.6 & 29.1\\
MLC-NC~\citep{tao2025mlc}         & \underline{70.1} & \underline{62.8} & \underline{44.0} & \underline{40.6} & \underline{66.3} & \underline{62.0} & \underline{37.5} & \underline{32.5}\\
XR-Transformer~\citep{zhang2021fast}& 66.8 & 59.8 & 41.6 & 37.6 & 64.0 & 59.9 & 32.9 & 28.3\\
MatchXML~\citep{ye2024matchxml}   & 67.2 & 60.1 & 41.9 & 37.8 & 64.2 & 60.1 & 33.1 & 28.5\\
ASL~\citep{ridnik2021asymmetric}  & 69.2 & 62.4 & 42.9 & 39.7 & 65.5 & 61.6 & 36.4 & 31.4\\
BalanceMix~\citep{song2024toward} & 69.0 & 62.1 & 42.7 & 39.4 & 65.3 & 61.5 & 36.2 & 31.2\\
MLBOTE~\citep{teng2024multi}      & 65.4 & 58.5 & 39.2 & 35.8 & 62.8 & 59.3 & 31.7 & 27.4\\
\textbf{CD--GTMLL}                & \textbf{71.5} & \textbf{64.2} & \textbf{45.1} & \textbf{42.0} & \textbf{67.3} & \textbf{63.3} & \textbf{38.4} & \textbf{33.9}\\
\bottomrule
\end{tabular}
} % 添加这一行闭合 resizebox
\caption{\textbf{Rare-F1 (\%) on artificially down-sampled ``R'' variants}.  ``@\(30\%\)'' and ``@\(50\%\)'' indicate the fraction of positives removed.  Best / second best are \textbf{bold} / \underline{underlined}.}
\label{tab:main_rare}
\end{table*}

\subsection{Comparison with Existing Methods}
\label{subsec:exp_sota}

\paragraph{Conventional setting (standard frequency)}

As shown in Table \ref{tab:main_standard}, CD--GTMLL obtains the best
 mAP and the strongest Rare-F1 on every dataset, confirming that curiosity-driven cooperation yields superior tail recall without compromising head performance.

\paragraph{Rare-focused setting}
Table \ref{tab:main_rare} reports Rare-F1 on the down-sampled splits with \(30\%\) and \(50\%\) positive-rate reduction.  CD--GTMLL enlarges the tail margin at every severity level, beating the strongest prior (MLC-NC) by \(+1.4\text{-}1.7\%\) absolute at the hardest \(50\%\) setting.
The PR curves in Fig.~\ref{fig:pr_tail} complement the Rare-F1 numbers of Table~\ref{tab:main_rare}: CD-GTMLL traces an unambiguously higher frontier, retaining \(\approx\!0.70\) precision at 0.7 recall (\(+5\,\text{pp}\) over MLC-NC, \(+8\,\text{pp}\) over ASL, \(+14\,\text{pp}\) over BCE) and never being dominated at any operating point.  This visual evidence reinforces that curiosity-guided cooperation yields consistently cleaner tail predictions than re-weighting or neural-collapse baselines.

\paragraph{Extreme MLC (XMC)}

\begin{table}[t]
\centering
\scriptsize
\setlength{\tabcolsep}{2pt}
\begin{tabular}{lccccccccc}
\toprule
\multirow{2}{*}{Method} &
\multicolumn{3}{c}{Eurlex-4K} &
\multicolumn{3}{c}{Wiki10-31K} &
\multicolumn{3}{c}{AmazonCat-13K}\\
\cmidrule(lr){2-4}\cmidrule(lr){5-7}\cmidrule(lr){8-10}
 & P@1 & P@3 & P@5 & P@1 & P@3 & P@5 & P@1 & P@3 & P@5\\
\midrule
BCE                           & 85.60 & 72.30 & 59.10 & 86.35 & 76.05 & 66.45 & 95.80 & 82.00 & 66.30\\
PE   & 86.55 & 73.08 & 59.94 & 87.10 & 77.25 & 67.50 & 95.92 & 82.11 & 66.50\\
SRB & 86.77 & 73.32 & 60.15 & 87.35 & 77.60 & 67.82 & 95.94 & 82.18 & 66.58\\
XR-Transformer & 87.22 & 74.39 & 61.69 & 88.00 & 78.70 & 69.10 & 96.25 & 82.72 & 67.01\\
MatchXML  & \underline{88.12} & \underline{75.00} & \underline{62.22} & \underline{89.30} & \underline{80.45} & \underline{70.89} & \underline{96.50} & \underline{83.25} & \underline{67.69}\\
EUT& 87.90 & 74.86 & 61.95 & 88.75 & 79.76 & 69.95 & 96.40 & 83.05 & 67.35\\
\textbf{CD--GTMLL}                  & \textbf{89.02} & \textbf{76.14} & \textbf{63.42} & \textbf{90.28} & \textbf{82.05} & \textbf{72.10} & \textbf{96.72} & \textbf{83.45} & \textbf{68.01}\\
\bottomrule
\end{tabular}
\caption{\textbf{Precision@k (\%) on three XMC benchmarks}.  Best/second best in \textbf{bold}/\underline{underline}.}
\label{tab:xmc}
\end{table}

As shown in Table \ref{tab:xmc}, CD-GTMLL outperforms strong XMC baselines-including state-of-the-art transformer architectures and the ETU optimiser-by up to \(+1.0\%\) P@1 on Eurlex-4K and \(+1.6\%\) P@3 on Wiki10-31K, demonstrating that curiosity-guided cooperation scales seamlessly to millions of sparse labels.

\begin{figure}[t]
    \centering
    \includegraphics[width=1\linewidth]{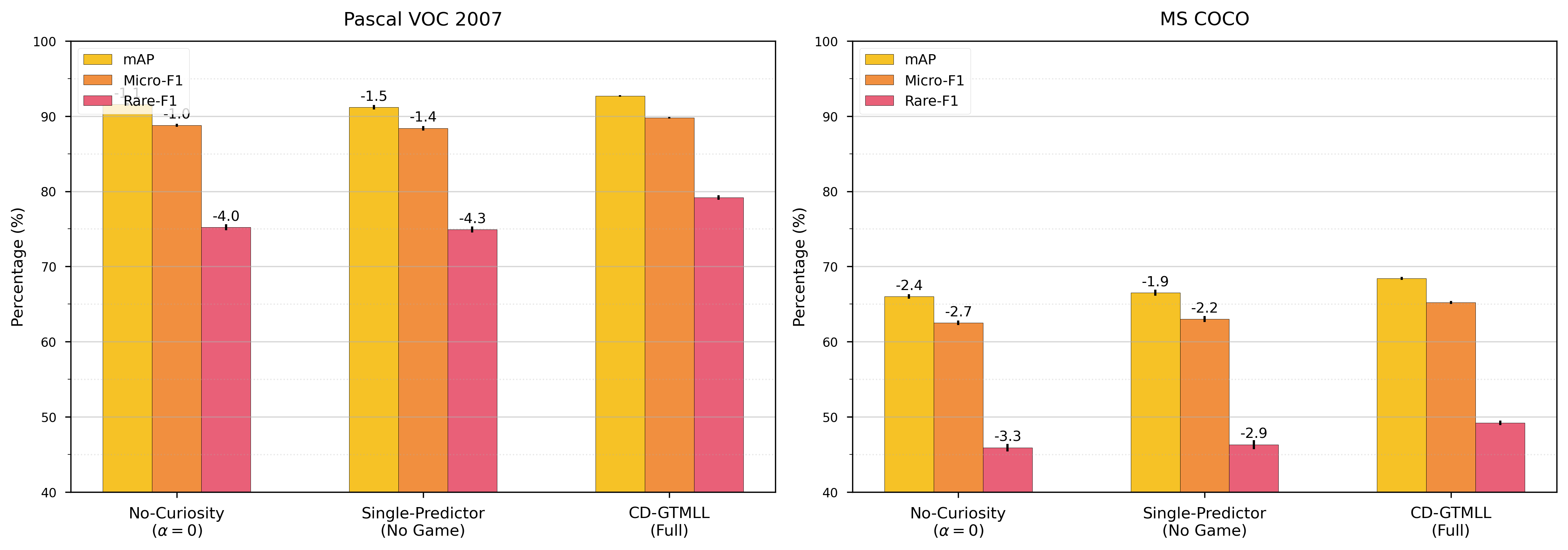}
    \caption{Ablation on Pascal VOC-2007 (left) and MS-COCO-2014 (right).
    Bars show mean\,$\pm$\,std over three seeds;
    numbers above non-full variants indicate the absolute gap to the full model.}
    \label{fig:ablation}
\end{figure}

\subsection{Ablation Studies}
\label{subsec:exp_ablation}
\paragraph{Contributions of each key component}

\begin{figure}[th] 
    \centering
    \includegraphics[width=.92\linewidth]{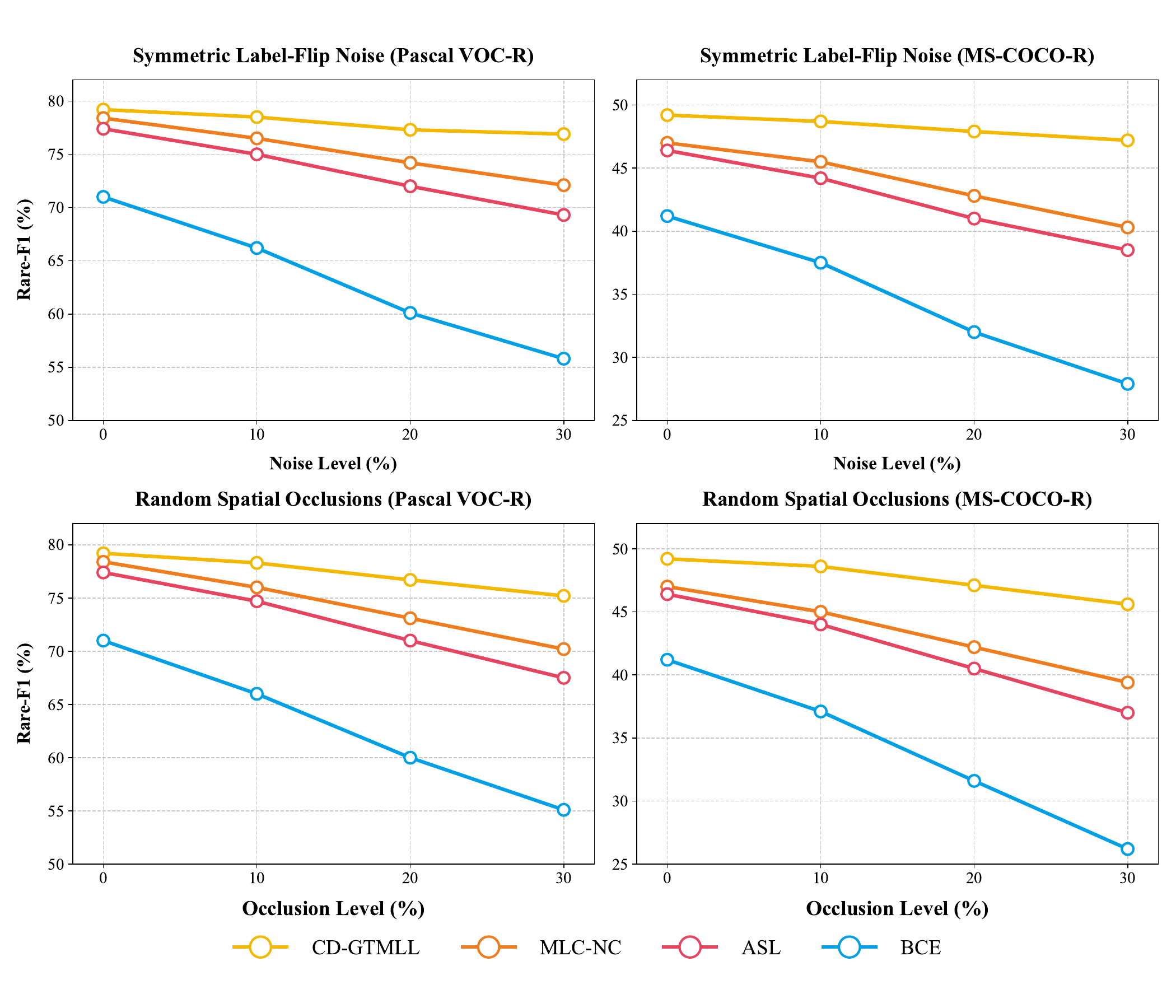} % 0.52 * 0.8 ≈ 0.416（保持原图比例）
    \caption{Absolute Rare-F1 degradation under label noise and occlusion.}
    \label{fig:noise_RO}
\end{figure}

\begin{figure}[ht] 
    \centering
    \includegraphics[width=0.6\linewidth]{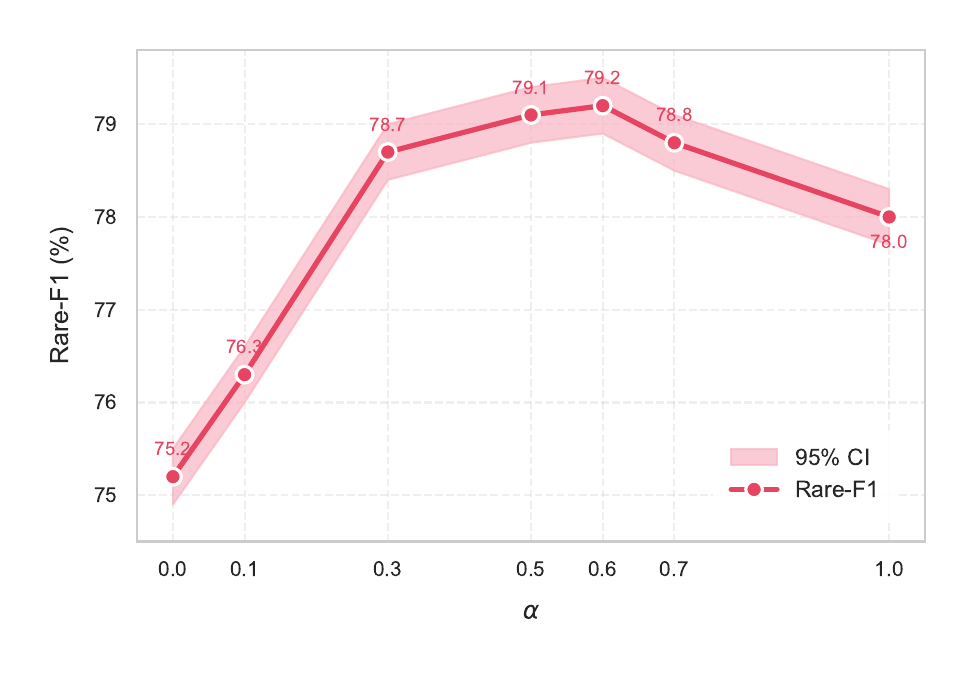} % 0.52 * 0.8 ≈ 0.416（保持原图比例）
    \vspace{-2mm}
    \caption{Rare-F1 (\%) on Pascal VOC for different \(\alpha\).}
    \label{fig:alpha_sensitivity}
\end{figure}

Figure~\ref{fig:ablation} contrasts three variants: (i) \emph{No-Curiosity} (\(\alpha\!=\!0\)), (ii) \emph{Single-Predictor} (one global head, no game), and (iii) the full CD-GTMLL.
Two consistent patterns emerge.
First, removing curiosity (orange bars) leaves head metrics almost intact (mAP drops by only \(0.3\%\) on VOC and \(2.4\%\) on COCO) but slashes Rare-F1 by \(\mathbf{4.0\%}\) and \(3.3\%\) respectively, confirming that the rarity-weighted bonus is the main driver of tail recall.
Second, collapsing the multi-player game into a single predictor (red bars) additionally harms global accuracy: mAP and Micro-F1 fall a further \(0.4\text{-}0.5\%\) on both datasets while Rare-F1 stays stuck near the No-Curiosity level.  This shows that label-space decomposition and disagreement signals are crucial for propagating the extra gradient injected by curiosity.
%The full CD-GTMLL (yellow) therefore combines both gains-recovering \(\,+1.5\%\) mAP, \(+1.4\%\) Micro-F1 and \(+4.3\%\) Rare-F1 over the best ablated variant on VOC, and larger margins on COCO-demonstrating that curiosity and cooperative game dynamics are complementary rather than interchangeable.

\paragraph{Robustness to Noisy Labels and Occlusions}
\label{subsec:robustness}
We investigate whether CD--GTMLL's curiosity signal still yields tail robustness
when training data contain \emph{(i) symmetric label-flip noise} and \emph{(ii) spatial occlusion perturbations}.
For each image we independently flip the ground-truth of every positive label with
probability $\rho\!\in\!\{0.1,0.2,0.3\}$ (\textbf{Noise-$\rho$}), or paste a random
$k\!\times\!k$ black patch covering $\gamma\!\in\!\{10\%,20\%,30\%\}$ of the pixels
(\textbf{Occ-$\gamma$}).
All models are trained from scratch on the corrupted set and evaluated on the \emph{clean} test split; we report \textit{Rare-F1}.

As shown in Figure~\ref{fig:noise_RO}, across both corruption types, \textbf{CD--GTMLL} consistently shows the \emph{smallest} Rare-F1 drop
(\(<\!2.3\%\) at 30\% noise and \(<\!4.0\%\) at 30\% occlusion),
whereas the strongest baseline (MLC-NC) loses up to \(6.3\%\)/\(8.2\%\).
This confirms that (i) the rarity-weighted curiosity continues to
push gradients toward tail labels even when annotations are partly wrong,
and (ii) disagreement-based exploration mitigates over-reliance on any single noisy cue,
making the game-theoretic ensemble inherently more noise-tolerant than static
re-weighting or single-head architectures.

\begin{figure}[th] 
    \centering
    \includegraphics[width=0.6\linewidth]{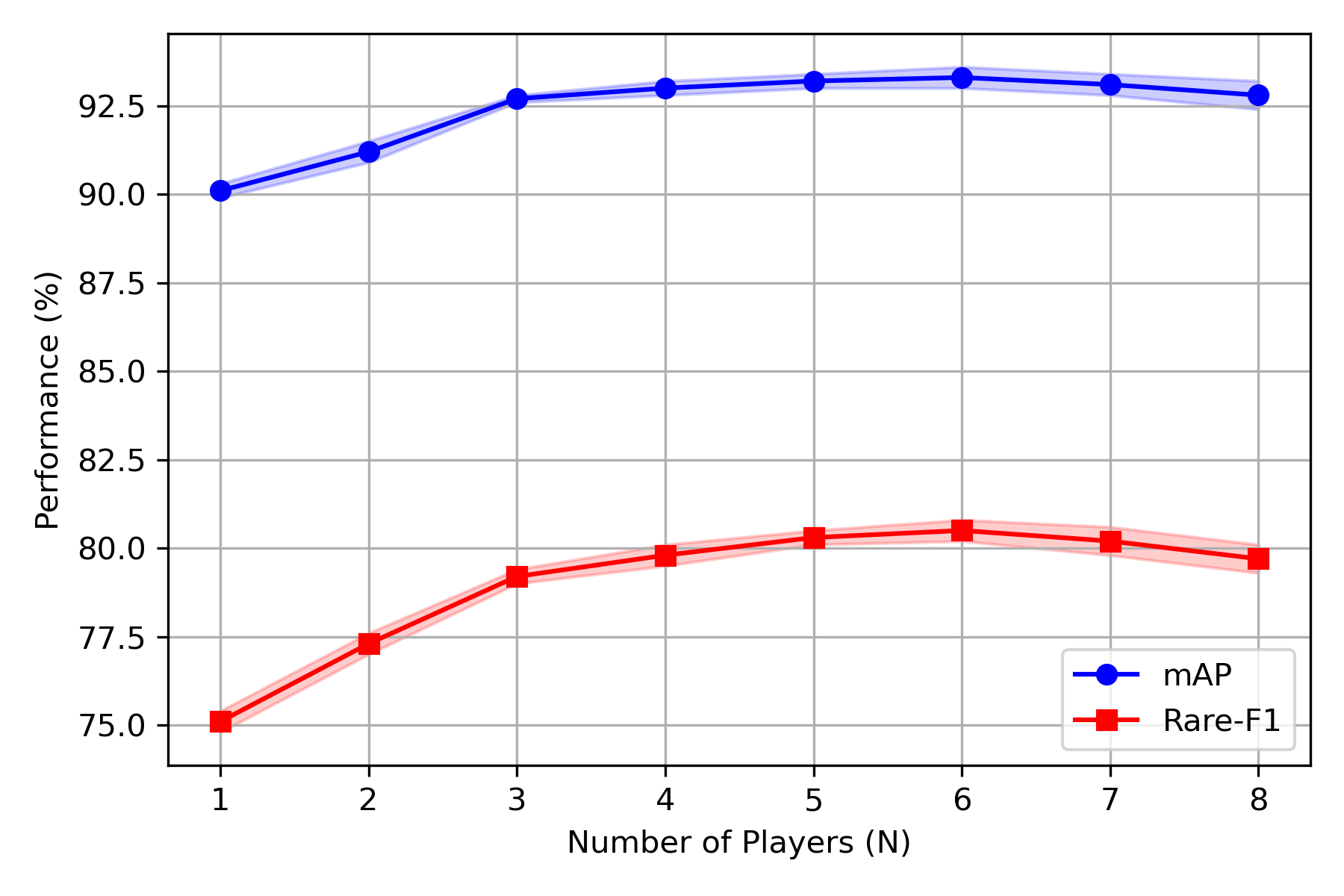} 
    \caption{Impact of $N$ (players) on Pascal VOC.}
    \label{fig:N_sensitivity}
\end{figure}

\paragraph{Sensitivity to \(\alpha\)}
Figure~\ref{fig:alpha_sensitivity} further examines the impact of varying \(\alpha\) on the Rare-F1 metric (Pascal VOC). The results are ''Mean$\pm$std'' over 3 runs.
We observe a steady increase in Rare-F1 for moderate \(\alpha\) values (e.g., 0.3--0.6), after which performance plateaus or slightly declines, possibly due to overemphasizing curiosity at the expense of overall accuracy. A similar trend emerges on MS-COCO, corroborating that tuning \(\alpha\) is important for balancing exploration and base multi-label accuracy.

\paragraph{Number of Players $N$}
As shown in Fig.~\ref{fig:N_sensitivity}, we also vary $N\in\{1,2,\dots,8\}$ on Pascal VOC and observe the following pattern:
1) Moving from $N=1$ (single) to $N=3$ significantly boosts mAP (from $90.1\%$ to $92.7\%$) and Rare-F1 (from $75.1\%$ to $79.2\%$).
2) Increasing $N$ up to around 5 or 6 continues to slightly improve Rare-F1 (peaking near $80.5\%$ at $N=6$).
3) Beyond $N=6$, adding more players actually \emph{reduces} overall performance slightly, likely due to redundant partitioning and overfitting within small subgroups of labels.

\begin{figure}[ht]
    \centering
    \includegraphics[width=0.9\linewidth]{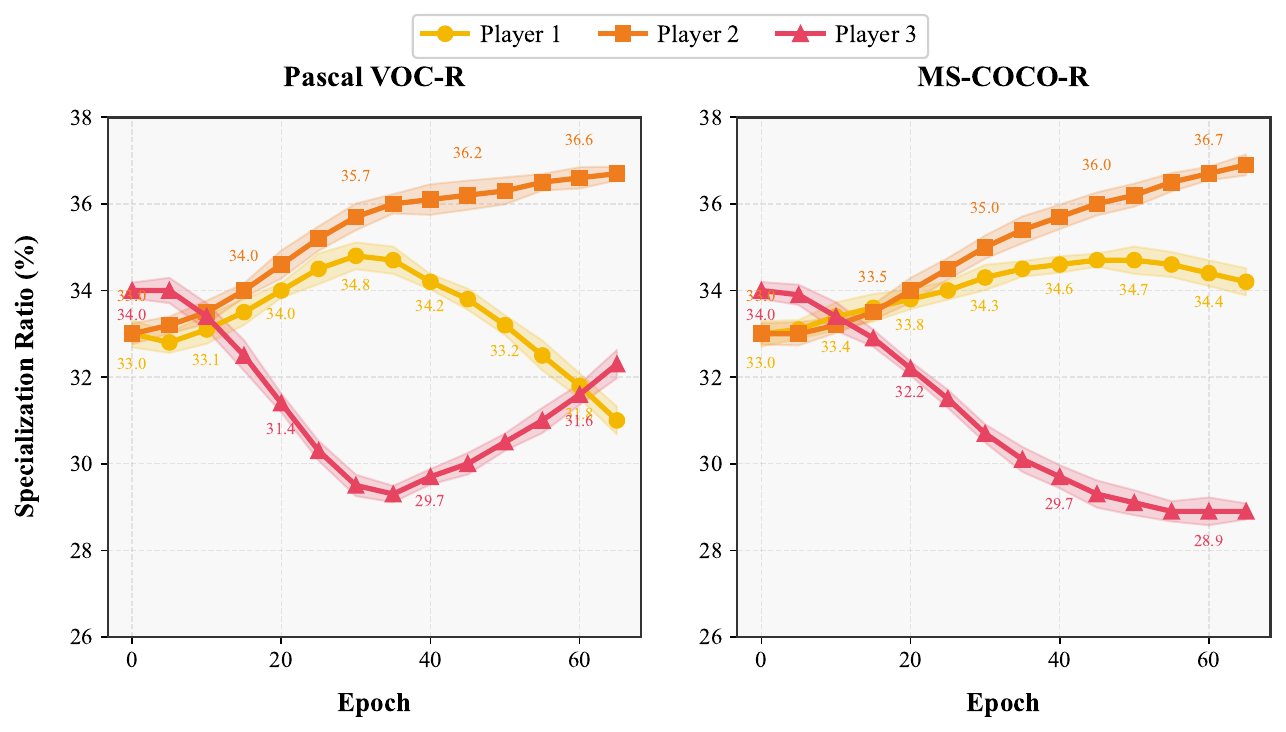} % 0.5*0.9=0.45
    \vspace{-2mm} 
    \caption{\textbf{Player specialization ratios (\%) on Pascal VOC across training epochs}. Each cell shows the percentage of labels for which that player outperforms the others.}
    \label{fig:spec_ratio}
\end{figure}

\subsection{Analysis of Multi-Player Behavior}
\label{subsec:exp_game_insight}
This section provides deeper insights into \emph{how} CD-GTMLL's cooperative multi-player structure, guided by curiosity, benefits the final performance. 

\paragraph{Player Specialization.}
Figure~\ref{fig:spec_ratio} compares \textit{Pascal VOC-R} and \textit{MS-COCO-R}.
Epochs~$0$--$15$ show balanced coverage ($\approx\!33\%$ per player).
From Epoch~20 onward, a stable \emph{division of labor} emerges:
\textbf{Player~2} rises to $\sim36\%$ by Epoch~60, while \textbf{Player~3} drops below $30\%$, making Player~2 a rare-label specialist and Player~3 a frequent-label one.
Tight confidence bands after Epoch~30 confirm convergence, and the parallel pattern across both datasets validates that curiosity-driven cooperation systematically yields complementary specialization.

\paragraph{Rare vs.\ Frequent Label Specialists}
We quantify specialization along two axes: \emph{(i) mean rank} (\#1 = best per-label accuracy) and \emph{(ii) specialization share} (\% of labels on which a player attains the best accuracy). 
Fig.~\ref{fig:specialists_rank} visualizes the mean ranks as heatmaps and Fig.~\ref{fig:specialists_share} shows the specialization shares. 
A clear pattern emerges across both datasets: \textbf{Player~2} consistently specializes on \textbf{rare} labels, while \textbf{Player~3} specializes on \textbf{frequent} labels; \textbf{Player~1} acts as a generalist. 
This division of labor mirrors the curiosity signal: the player receiving the strongest rarity-weighted gains on tail labels becomes the `tail expert,' whereas another player consolidates head performance-yielding wider tail coverage without sacrificing head accuracy.

\begin{figure}[ht]
    \centering
    \includegraphics[width=0.86\linewidth]{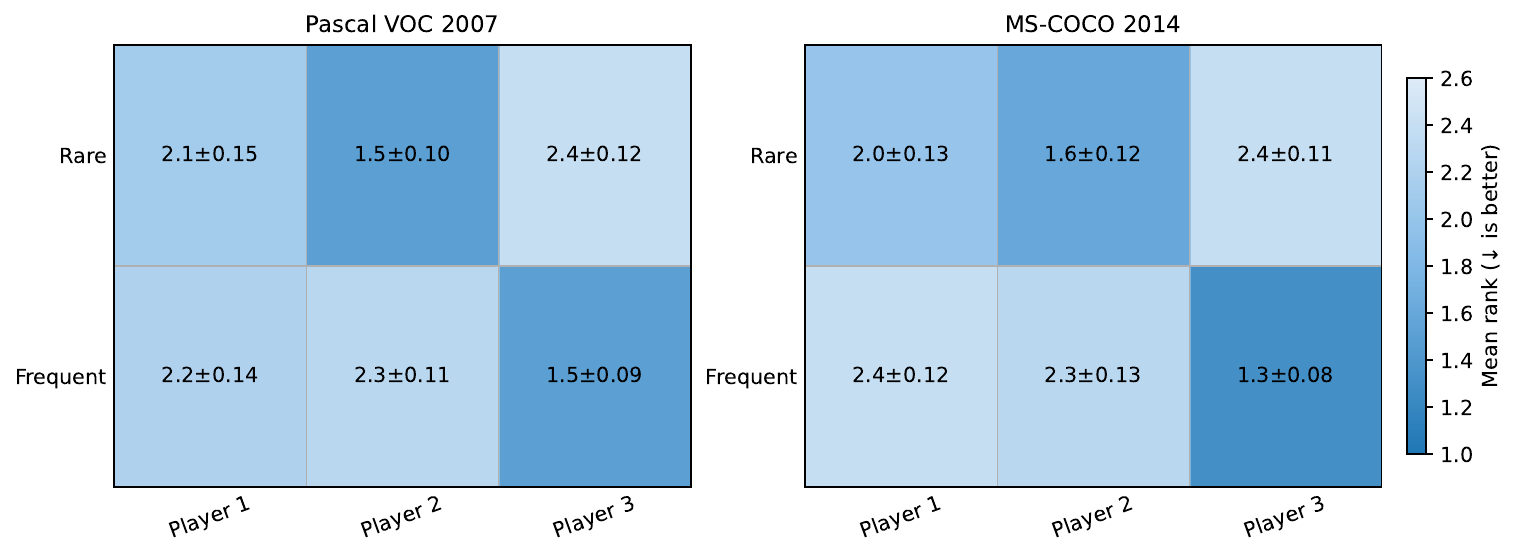}
    \caption{\textbf{Mean rank heatmaps} (lower is better). Each cell shows mean rank over labels.}
    \label{fig:specialists_rank}
\end{figure}

\begin{figure}[ht]
    \centering
    \includegraphics[width=0.86\linewidth]{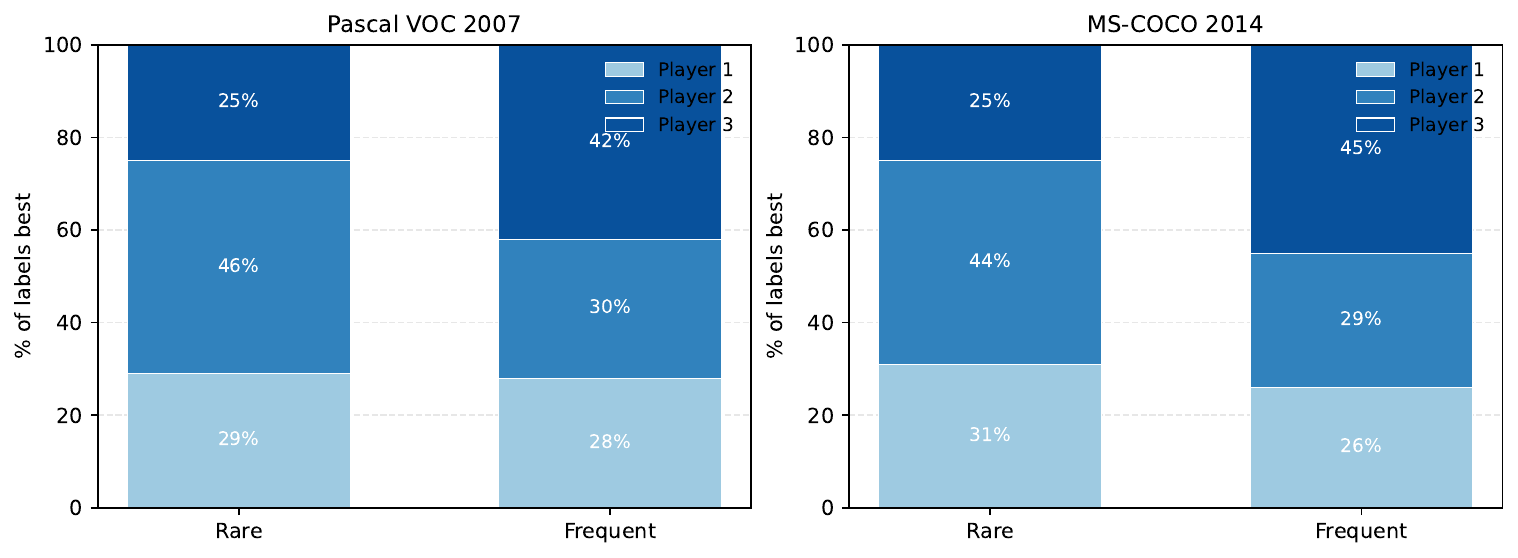}
    \caption{\textbf{Specialization shares} (\% of labels on which a player attains best accuracy). Player~2 dominates \emph{rare}, Player~3 dominates \emph{frequent} across both datasets.}
    \label{fig:specialists_share}
\end{figure}

\section{Conclusion}
\label{sec:conclusion}

We framed long-tail multi-label classification as a cooperative potential game and introduced \emph{CD-GTMLL}, where curiosity-weighted rewards drive multiple players to specialise on under-represented labels.  Theoretical analysis guarantees convergence to tail-aware equilibria and establishes an explicit link between the rarity-weighted objective and the evaluation metric \textit{Rare-F1}.  Empirically, CD-GTMLL delivers consistent state-of-the-art gains on four conventional benchmarks and three extreme-scale XMC datasets, while diagnostic studies show emergent division of labour and monotone ascent of the potential, validating the game-theoretic design.

Our implementation still relies on a fixed overlap schedule and an extra forward pass per player. Future work will explore adaptive overlap and weight sharing to further reduce overhead.

\bibliographystyle{elsarticle-harv} 
\bibliography{refs}

\newpage

\appendix

\section{Proof of Theorem \ref{thm:existence_tail}} \label{AP:A}
\setcounter{equation}{0}
\renewcommand{\theequation}{A.\arabic{equation}}

We restate the setting for completeness. The cooperative payoff is
\begin{equation}
\label{eqA:payoff}
\begin{split}
R(\boldsymbol\theta)
&=\mathbb E\!\Big[\mathcal M\big(\hat{\mathbf y}(\boldsymbol\theta),\mathbf y\big)\Big], \\
\mathcal M(\hat{\mathbf y},\mathbf y)&=\frac{1}{Z}\sum_{\ell=1}^L w_\ell\!\left[ y_\ell\log\hat p_\ell+(1-y_\ell)\log(1-\hat p_\ell)\right],
\end{split}
\end{equation}
where $w_\ell>0$, $Z=\sum_\ell w_\ell$, and $\hat p_\ell$ are fused probabilities.

\paragraph{A.1 Existence (Weierstrass)}
By Assumption~\ref{assump:continuity}(i) each $\Theta_i$ is compact and nonempty, hence $\Theta=\prod_i\Theta_i$ is compact.
The map $\boldsymbol\theta\mapsto \hat{\mathbf p}(\boldsymbol\theta)$ is continuous (Assumption~\ref{assump:continuity}(ii)); so is $\mathcal M$.
Therefore $R$ in~\eqref{eqA:payoff} is continuous; by the extreme-value theorem it attains a global maximizer on~$\Theta$.

\paragraph{A.2 Global maximizer $\Rightarrow$ Nash equilibrium}
Let $\boldsymbol\theta^\star$ be a global maximizer. For any player $i$ and any unilateral deviation $\theta_i'$,
\begin{equation}
\label{eqA:ne}
R(\theta_1^\star,\ldots,\theta_i',\ldots,\theta_N^\star)\ \le\ R(\theta_1^\star,\ldots,\theta_i^\star,\ldots,\theta_N^\star),
\end{equation}
since all players share the identical payoff $R$. Hence no player can improve by deviating: $\boldsymbol\theta^\star$ is a pure NE.

\paragraph{A.3 Tail-awareness via a strictly positive directional derivative}
Fix a tail label $\ell\in\mathcal L_T$ with $\Pr(y_\ell=1)>0$. Suppose, for contradiction, that there exists $\tau\in(\varepsilon,1-\varepsilon)$ with
\begin{equation}
\label{eqA:blind}
\hat p_\ell^\star(\mathbf x)\ \le\ \tau\qquad\text{for a.e. }\mathbf x\ \text{with }y_\ell=1.
\end{equation}
Let $S\subset\{\mathbf x:y_\ell=1\}$ be any measurable subset with $\Pr(S)>0$.
By Assumption~\ref{assump:improvability}(ii) there exist a player $k$ with $\ell\in\mathcal L_k$ and a feasible direction $\mathbf v_k$ such that the one-sided Gâteaux derivative
\begin{equation}
\label{eqA:gateaux}
\dot p_\ell(\mathbf x)\;=\;\left.\frac{\mathrm d}{\mathrm dt}\right|_{t=0^+}\hat p_\ell\!\big(\mathbf x;\theta_1^\star,\ldots,\theta_k^\star+t\mathbf v_k,\ldots,\theta_N^\star\big)
\;\ge\; c_0\,\mathbf 1_S(\mathbf x),\qquad c_0>0,
\end{equation}
and $\dot p_j(\mathbf x)\equiv 0$ for $j\neq \ell$.

Define $R(t)\triangleq R(\theta_1^\star,\ldots,\theta_k^\star+t\mathbf v_k,\ldots,\theta_N^\star)$.
Using the bounded-prediction clause (Assumption~\ref{assump:improvability}(i)) and dominated convergence, we may differentiate under the expectation:
\begin{align}
\label{eqA:Rprime}
R'(0^+)
&= \mathbb E\!\left[\frac{1}{Z}\sum_{j=1}^L w_j\Big(\frac{y_j}{\hat p_j^\star}-\frac{1-y_j}{1-\hat p_j^\star}\Big)\dot p_j\right]
= \mathbb E\!\left[\frac{w_\ell}{Z}\Big(\frac{y_\ell}{\hat p_\ell^\star}-\frac{1-y_\ell}{1-\hat p_\ell^\star}\Big)\dot p_\ell\right],
\end{align}
because $\dot p_j\equiv 0$ for $j\neq\ell$.
On $S$ we have $y_\ell=1$ and, by \eqref{eqA:blind}, $\hat p_\ell^\star\le\tau$; hence
\begin{equation}
\label{eqA:ineq}
\Big(\frac{y_\ell}{\hat p_\ell^\star}-\frac{1-y_\ell}{1-\hat p_\ell^\star}\Big)\mathbf 1_S
=\frac{1}{\hat p_\ell^\star}\,\mathbf 1_S
\ \ge\ \frac{1}{\tau}\,\mathbf 1_S.
\end{equation}
Combining \eqref{eqA:gateaux}-\eqref{eqA:ineq} yields the chain
\begin{align}
\label{eqA:chain}
R'(0^+)
&\ge \mathbb E\!\left[\frac{w_\ell}{Z}\cdot \frac{1}{\tau}\cdot c_0\,\mathbf 1_S\right]
= \frac{w_\ell c_0}{Z\tau}\,\Pr(S)
\ >\ 0.
\end{align}
Therefore there exists $t_0>0$ such that $R(t)>R(0)$ for all $t\in(0,t_0)$, contradicting the global maximality of $\boldsymbol\theta^\star$.
The contradiction proves that \eqref{eqA:blind} is false for every $\tau\in(\varepsilon,1-\varepsilon)$, i.e.,
$\Pr\big(y_\ell=1 \land \hat p_\ell^\star>\tau\big)>0$.
\hfill $\square$

\paragraph{A.4 Remarks.}
(i) If one prefers to avoid Assumption~\ref{assump:improvability}(i), an equivalent argument applies to \emph{clipped} probabilities 
$\tilde p_\ell=\min\{1-\varepsilon,\max\{\varepsilon,\hat p_\ell\}\}$; the optimization order induced by $R$ is unchanged on interior points.
(ii) The derivative formula in \eqref{eqA:Rprime} immediately extends to any surrogate $\mathcal M$ whose partial derivative
$\partial \mathcal M/\partial \hat p_\ell$ is strictly positive on tail positives $y_\ell=1$; in such cases the same inequality chain as in \eqref{eqA:chain} establishes tail-awareness.

\section{Proof of Proposition \ref{prop:rare_priority}} \label{AP.B}
\setcounter{equation}{0}
\renewcommand{\theequation}{B.\arabic{equation}}

We restate the objects used in the proof. The curiosity of player $i$ is
\begin{equation}
\label{eqB:C}
C_i(\mathbf x)=
\sum_{\ell\in\mathcal L_i}\frac{1}{1+\mathrm{freq}(\ell)}
\big[y_\ell\log p_{i\ell}(\mathbf x)+(1-y_\ell)\log(1-p_{i\ell}(\mathbf x))\big]
+\beta\,D\!\big(\pi_i(\mathbf x),\,\overline{\pi}_{-i}(\mathbf x)^{\text{stop}}\big).
\end{equation}
The per-player objective is
\begin{equation}
\label{eqB:J}
J_i(\theta_i)=R(\{\theta_j\})+\alpha\,\mathbb E\big[C_i(\mathbf x)\big].
\end{equation}
Let $z_{i\ell}$ be the logit of player $i$ for label $\ell$ so that $p_{i\ell}=\sigma(z_{i\ell})\in(0,1)$.  
We only use the \emph{rarity term} of $C_i$ for a \emph{lower bound}; the $D$-term is ignored (it can be added with $\beta\ge 0$ without invalidating positivity).

\paragraph{Step 1: derivative of the rarity term}
For any $(p,y)\in(0,1)\times\{0,1\}$,
\begin{equation}
\label{eqB:der_basic}
\frac{\partial}{\partial z}\Big(y\log p+(1-y)\log(1-p)\Big)
= \frac{\partial p}{\partial z}\Big(\frac{y}{p}-\frac{1-y}{1-p}\Big)
= y-p,
\end{equation}
since $\frac{\partial p}{\partial z}=p(1-p)$ and $\frac{p(1-p)}{p}=1-p$, $\frac{p(1-p)}{1-p}=p$.
Therefore, for $\ell\in\mathcal L_k$,
\begin{equation}
\label{eqB:der_label}
\frac{\partial}{\partial z_{k\ell}}
\mathbb E\!\left[\frac{1}{1+\mathrm{freq}(\ell)}\big(y_\ell\log p_{k\ell}+(1-y_\ell)\log(1-p_{k\ell})\big)\right]
=
\frac{1}{1+\mathrm{freq}(\ell)}\,\mathbb E\big[y_\ell-p_{k\ell}\big].
\end{equation}

\paragraph{Step 2: restriction to tail false negatives}
Fix a tail label $\ell\in\mathcal L_T$ and a threshold $\tau\in(\varepsilon,1-\varepsilon)$, and define
\begin{equation}
\label{eqB:setS}
S \;=\; \big\{\mathbf x:\ y_\ell=1,\ \hat p_\ell(\mathbf x)\le \tau\big\}.
\end{equation}
Assumption~\ref{assump:improvability}(i) (bounded predictions) implies $p_{k\ell}(\mathbf x)\le 1-\varepsilon$ almost surely.  
Hence on $S$,
\begin{equation}
\label{eqB:pos_margin}
\big(y_\ell-p_{k\ell}\big)\mathbf 1_S \;\ge\; \varepsilon\,\mathbf 1_S.
\end{equation}
Taking expectation in~\eqref{eqB:der_label} and using~\eqref{eqB:pos_margin},
\begin{equation}
\label{eqB:rarity_lb}
\frac{\partial}{\partial z_{k\ell}}\,
\alpha\,\mathbb E\!\Big[\frac{1}{1+\mathrm{freq}(\ell)}\big(y_\ell\log p_{k\ell}+(1-y_\ell)\log(1-p_{k\ell})\big)\Big]
\ \ge\ 
\alpha\,\frac{1}{1+\mathrm{freq}(\ell)}\,\varepsilon\,\Pr(S).
\end{equation}

\paragraph{Step 3: assembling the bound for $\partial J_k/\partial z_{k\ell}$}
From~\eqref{eqB:J} and the previous step we obtain
\begin{equation}
\label{eqB:final}
\frac{\partial J_k}{\partial z_{k\ell}}
=
\frac{\partial R}{\partial z_{k\ell}}
+\alpha\,\frac{\partial }{\partial z_{k\ell}}\mathbb E[C_k(\mathbf x)]
\ \ge\ 
\alpha\,\frac{1}{1+\mathrm{freq}(\ell)}\,\varepsilon\,\Pr(S),
\end{equation}
where we dropped $\tfrac{\partial R}{\partial z_{k\ell}}$ and the $D$-term for a lower bound (both can only increase the derivative when moving in the positive-$z_{k\ell}$ direction on $S$).\footnote{For the rarity-weighted logistic $R$, the chain rule yields an additional nonnegative term on $S$: 
$\tfrac{\partial R}{\partial z_{k\ell}}
=\mathbb E\!\big[\tfrac{w_\ell}{Z}\,\tfrac{1}{\hat p_\ell}\,\omega_{k\ell}\,p_{k\ell}(1-p_{k\ell})\,\mathbf 1_S\big]
\ge 0$.}
Since $\Pr(S)>0$ by assumption, the right-hand side of~\eqref{eqB:final} is strictly positive.  
Therefore $\frac{\partial J_k}{\partial z_{k\ell}}>0$, proving the claim that the curiosity term prioritizes tail labels and prevents stagnation while tail false negatives remain. \hfill$\square$

\section{Proof of Theorem \ref{thm:conv}} \label{AP.C}
\setcounter{equation}{0}
\renewcommand{\theequation}{C.\arabic{equation}}

We analyze the cyclic block gradient-ascent version of Alg.~\ref{alg:cdgtmll} on the potential
\(
\Phi(\{\theta_i\}) = R(\{\theta_i\}) + \alpha\sum_{i=1}^N \mathbb E[C_i]
\)
(cf.\ \eqref{eq:potential}). By construction $C_i$ depends only on $\theta_i$ (stop-gradient peer average), hence $\nabla_{\theta_i}\Phi \equiv \nabla_{\theta_i}J_i$.

\paragraph{Setup}
Let $\boldsymbol\theta^{(t)}=(\theta_1^{(t)},\ldots,\theta_N^{(t)})$ denote the parameter after $t$ block updates. At update $t$, we pick block $i_t\in\{1,\ldots,N\}$ in cyclic order and perform
\begin{equation}
\label{eqC:update}
\theta_{i_t}^{(t+1)} \;=\; \theta_{i_t}^{(t)} + \eta_{i_t}\,\nabla_{\theta_{i_t}}\Phi\big(\boldsymbol\theta^{(t)}\big), 
\qquad
\theta_{j}^{(t+1)} \;=\; \theta_{j}^{(t)} \ \ (j\neq i_t),
\end{equation}
with stepsize $0<\eta_{i_t}\le 1/L_{i_t}$, where $L_{i}$ is the block Lipschitz constant in Assumption~\ref{assump:reg}(ii).

\paragraph{C.1 One-step ascent guarantee.}
By block-$L_{i_t}$ smoothness (Descent/Ascent Lemma) applied to the function $\phi_i(\theta_i)\equiv \Phi(\theta_1^{(t)},\ldots,\theta_i,\ldots,\theta_N^{(t)})$,
for any $\Delta\in T_{\theta_{i_t}^{(t)}}\Theta_{i_t}$ we have
\begin{equation}
\label{eqC:lemma}
\Phi\big(\boldsymbol\theta^{(t)}+\mathbf e_{i_t}\Delta\big)\ 
\ge\ 
\Phi\big(\boldsymbol\theta^{(t)}\big)\ +\ \langle \nabla_{\theta_{i_t}}\Phi(\boldsymbol\theta^{(t)}),\,\Delta\rangle\ -\ \frac{L_{i_t}}{2}\,\|\Delta\|^2.
\end{equation}
Choosing $\Delta=\eta_{i_t}\nabla_{\theta_{i_t}}\Phi(\boldsymbol\theta^{(t)})$ and using \eqref{eqC:update} yields
\begin{equation}
\label{eqC:increase}
\Phi\big(\boldsymbol\theta^{(t+1)}\big)\ -\ \Phi\big(\boldsymbol\theta^{(t)}\big)
\ \ge\ 
\eta_{i_t}\Big(1-\frac{L_{i_t}\eta_{i_t}}{2}\Big)\ \big\|\nabla_{\theta_{i_t}}\Phi(\boldsymbol\theta^{(t)})\big\|^2.
\end{equation}
Under $0<\eta_{i_t}\le 1/L_{i_t}$, the coefficient is at least $\eta_{i_t}/2>0$. Define
\begin{equation}
\label{eqC:gamma}
\gamma_t \ \triangleq\ \eta_{i_t}\Big(1-\frac{L_{i_t}\eta_{i_t}}{2}\Big) \ \ge\ \frac{\eta_{i_t}}{2}\ > 0.
\end{equation}

\paragraph{C.2 Monotonicity and summability.}
Summing \eqref{eqC:increase} from $t=0$ to $T-1$,
\begin{equation}
\label{eqC:telescope}
\Phi\big(\boldsymbol\theta^{(T)}\big)\ -\ \Phi\big(\boldsymbol\theta^{(0)}\big)
\ \ge\ 
\sum_{t=0}^{T-1}\gamma_t\,\big\|\nabla_{\theta_{i_t}}\Phi(\boldsymbol\theta^{(t)})\big\|^2.
\end{equation}
Assumption~\ref{assump:reg}(iv) ensures $\Phi$ is bounded above by some $U<\infty$. Therefore the LHS is upper-bounded by $U-\Phi(\boldsymbol\theta^{(0)})$, implying
\begin{equation}
\label{eqC:summable}
\sum_{t=0}^{\infty}\gamma_t\,\big\|\nabla_{\theta_{i_t}}\Phi(\boldsymbol\theta^{(t)})\big\|^2\ <\ \infty.
\end{equation}
In particular, $\Phi(\boldsymbol\theta^{(t)})$ is monotone non-decreasing and convergent, and the weighted block-gradient squares are summable.

\paragraph{C.3 Vanishing block gradients along subsequences.}
Because each block $i$ is visited infinitely often in the cyclic schedule, there exists an infinite index set $\mathcal T_i\subset\mathbb N$ with $i_t=i$. From \eqref{eqC:summable} and $\gamma_t\ge \eta_{i}/2>0$ (fixed per block), we have
\(
\sum_{t\in\mathcal T_i}\big\|\nabla_{\theta_{i}}\Phi(\boldsymbol\theta^{(t)})\big\|^2<\infty,
\)
which implies
\begin{equation}
\label{eqC:vanish}
\lim_{t\in\mathcal T_i,\ t\to\infty}\big\|\nabla_{\theta_{i}}\Phi(\boldsymbol\theta^{(t)})\big\|\ =\ 0
\quad\text{for all }i=1,\ldots,N.
\end{equation}

\paragraph{C.4 Stationarity of limit points.}
Let $\boldsymbol\theta^\infty$ be any limit point of $\{\boldsymbol\theta^{(t)}\}$; there exists a subsequence $t_k\to\infty$ with $\boldsymbol\theta^{(t_k)}\to \boldsymbol\theta^\infty$. By the cyclic schedule, for each $i$ we can choose a further subsequence $t_k\in\mathcal T_i$. By continuity of $\nabla\Phi$ (Assumption~\ref{assump:reg}(i)) and \eqref{eqC:vanish},
\begin{equation}
\nabla_{\theta_i}\Phi(\boldsymbol\theta^\infty)\ =\ \lim_{t_k\to\infty}\nabla_{\theta_i}\Phi(\boldsymbol\theta^{(t_k)})\ =\ \mathbf 0,\qquad \forall i.
\end{equation}
Thus every limit point is first-order stationary.

\paragraph{C.5 extra blocks.}
If the backbone and fusion parameters $(\theta_0,\omega)$ are updated by a gradient-ascent step on $\Phi$ (Alg.~\ref{alg:cdgtmll}, last line), they form additional blocks with their own Lipschitz constants $L_0,L_\omega$ and stepsizes $0<\eta_0\le 1/L_0$, $0<\eta_\omega\le 1/L_\omega$. The argument above extends verbatim with $N$ replaced by $N+2$.

\hfill$\square$

%===================== Appendix D =====================

\section{Proof of Theorem \ref{thm:microF1_bound}}
\setcounter{equation}{0}
\renewcommand{\theequation}{D.\arabic{equation}}

We restate the cooperative payoff for the rarity-weighted logistic utility (cf.\ \eqref{eq:global_payoff}):
\begin{equation}
\label{eqD:R}
R(\{\theta_i\})=\frac{1}{Z}\sum_{\ell=1}^{L}w_\ell\Big(\Pr(y_\ell\!=\!1)\,\mathbb E[\log \hat p_\ell \mid y_\ell\!=\!1]+(1-\Pr(y_\ell\!=\!1))\,\mathbb E[\log(1-\hat p_\ell)\mid y_\ell\!=\!0]\Big),
\end{equation}
with $w_\ell>0$, $Z=\sum_{\ell}w_\ell$, and clipped probabilities $\hat p_\ell\in[\varepsilon,1-\varepsilon]$.

\paragraph{D.1 Threshold control via logistic tails.}
For any $\tau\in(\varepsilon,1-\varepsilon)$ and any $\ell$:
\begin{align}
\label{eqD:thr0}
\mathbf 1\{\hat p_\ell\ge \tau,\,y_\ell=0\}
&\le \frac{-\log(1-\hat p_\ell)}{-\log(1-\tau)}\,\mathbf 1\{y_\ell=0\},
\\
\label{eqD:thr1}
\mathbf 1\{\hat p_\ell< \tau,\,y_\ell=1\}
&\le \frac{-\log \hat p_\ell}{-\log \tau}\,\mathbf 1\{y_\ell=1\}.
\end{align}
Indeed, if $y_\ell=0$ and $\hat p_\ell\!\ge\!\tau$ then $-\log(1-\hat p_\ell)\!\ge\!-\log(1-\tau)$, hence \eqref{eqD:thr0}; \eqref{eqD:thr1} is analogous.

\paragraph{D.2 From label-wise to micro FP/FN.}
Define the tail conditional losses
\begin{equation}
\label{eqD:condloss}
L_\ell^+=\mathbb E[-\log \hat p_\ell \mid y_\ell=1],\qquad L_\ell^-=\mathbb E[-\log(1-\hat p_\ell) \mid y_\ell=0].
\end{equation}
Let $\pi_\ell=\Pr(y_\ell=1)$ and $\mu_{\mathrm{Pos},\mathrm T}=\sum_{\ell\in\mathcal L_{\mathrm T}}\pi_\ell$.  
By taking expectations in \eqref{eqD:thr0}--\eqref{eqD:thr1}, multiplying by the corresponding priors, and summing over $\ell\in\mathcal L_{\mathrm T}$,
\begin{align}
\label{eqD:microfp}
\mu_{\mathrm{FP},\mathrm T}(\tau)&=\sum_{\ell\in\mathcal L_{\mathrm T}}\Pr(\hat p_\ell\ge\tau,\,y_\ell=0)\ \le\ \frac{1}{-\log(1-\tau)}\sum_{\ell\in\mathcal L_{\mathrm T}} (1-\pi_\ell)\,L_\ell^-,
\\
\label{eqD:microfn}
\mu_{\mathrm{FN},\mathrm T}(\tau)&=\sum_{\ell\in\mathcal L_{\mathrm T}}\Pr(\hat p_\ell<\tau,\,y_\ell=1)\ \le\ \frac{1}{-\log\tau}\sum_{\ell\in\mathcal L_{\mathrm T}} \pi_\ell\,L_\ell^{+}.
\end{align}

\paragraph{D.3 Controlling tail conditional loss by the objective.}
Let
\begin{equation}
\label{eqD:St}
S_{\mathrm T}\;\triangleq\;\sum_{\ell\in\mathcal L_{\mathrm T}}\!\big(\pi_\ell L_\ell^{+}+(1-\pi_\ell)L_\ell^{-}\big),
\qquad
w_{\min,\mathrm T}\;=\;\min_{\ell\in\mathcal L_{\mathrm T}} w_\ell.
\end{equation}
From \eqref{eqD:R}, the tail contribution satisfies
\begin{equation}
\label{eqD:tailcontrib}
\sum_{\ell\in\mathcal L_{\mathrm T}} w_\ell\Big(\pi_\ell\,\mathbb E[\log \hat p_\ell | y_\ell=1]+(1-\pi_\ell)\,\mathbb E[\log(1-\hat p_\ell) | y_\ell=0]\Big)\ \ge\ Z\,R,
\end{equation}
because the (non-positive) head contribution can only increase the LHS when dropped. Multiplying by $-1$ and using $w_\ell\ge w_{\min,\mathrm T}$ on $\mathcal L_{\mathrm T}$,
\begin{equation}
\label{eqD:St_bound}
w_{\min,\mathrm T}\,S_{\mathrm T}\ \le\ -Z\,R
\quad\Longrightarrow\quad
S_{\mathrm T}\ \le\ \frac{Z}{w_{\min,\mathrm T}}\;(-R).
\end{equation}

\paragraph{D.4 Proof of Theorem \ref{thm:microF1_bound}.}
Using $\mu_{\mathrm{TP},\mathrm T}=\mu_{\mathrm{Pos},\mathrm T}-\mu_{\mathrm{FN},\mathrm T}$, we rewrite
\begin{equation}
\label{eqD:F1rewrite}
\begin{split}
\widetilde F_{\mathrm T}(\tau)
&= \frac{2(\mu_{\mathrm{Pos},\mathrm T}-\mu_{\mathrm{FN},\mathrm T})}{2(\mu_{\mathrm{Pos},\mathrm T}-\mu_{\mathrm{FN},\mathrm T})+\mu_{\mathrm{FP},\mathrm T}+\mu_{\mathrm{FN},\mathrm T}} \\
&= 1 - \frac{\mu_{\mathrm{FP},\mathrm T}+\mu_{\mathrm{FN},\mathrm T}}{2\mu_{\mathrm{Pos},\mathrm T}+\mu_{\mathrm{FP},\mathrm T}-\mu_{\mathrm{FN},\mathrm T}} \\
&\ge 1-\frac{\mu_{\mathrm{FP},\mathrm T}+\mu_{\mathrm{FN},\mathrm T}}{2\mu_{\mathrm{Pos},\mathrm T}}.
\end{split}
\end{equation}
By \eqref{eqD:microfp}--\eqref{eqD:microfn} and the elementary inequality $aA+bB\le \max\{a,b\}(A+B)$ for $a,b,A,B\ge 0$,
\begin{equation}
\label{eqD:sumfpfn}
\mu_{\mathrm{FP},\mathrm T}(\tau)+\mu_{\mathrm{FN},\mathrm T}(\tau)
\ \le\ 
\kappa(\tau)\,\sum_{\ell\in\mathcal L_{\mathrm T}}\big((1-\pi_\ell)L_\ell^{-}+\pi_\ell L_\ell^{+}\big)
\ =\
\kappa(\tau)\,S_{\mathrm T},
\end{equation}
where $\kappa(\tau)=\max\{1/[-\log(1-\tau)],1/[-\log\tau]\}$.  
Combining \eqref{eqD:F1rewrite}, \eqref{eqD:sumfpfn} and \eqref{eqD:St_bound} gives
\begin{equation}
\label{eqD:final}
\widetilde F_{\mathrm T}(\tau)\ \ge\ 1-\frac{\kappa(\tau)}{2\mu_{\mathrm{Pos},\mathrm T}}\,S_{\mathrm T}
\ \ge\ 
1-\frac{\kappa(\tau)\,Z}{2\,\mu_{\mathrm{Pos},\mathrm T}\,w_{\min,\mathrm T}}\,\bigl(-R(\{\theta_i\})\bigr),
\end{equation}
which is exactly \eqref{eq:microF1_main_bound}.

\paragraph{D.5 Tail-only payoff variant.}
If we replace $R$ by the tail-only payoff $R_{\mathrm T}$ with $Z_{\mathrm T}=\sum_{\ell\in\mathcal L_{\mathrm T}}w_\ell$, then \eqref{eqD:tailcontrib} becomes
$Z_{\mathrm T}R_{\mathrm T}\le \sum_{\ell\in\mathcal L_{\mathrm T}} w_\ell(\cdots)$ and \eqref{eqD:St_bound} refines to
$S_{\mathrm T}\le (Z_{\mathrm T}/w_{\min,\mathrm T})(-R_{\mathrm T})$, which tightens the constant in \eqref{eqD:final}.

\paragraph{D.6 Instance-level Rare--F1}
For $F_{\mathrm T}=\mathbb E\!\big[2\,\mathrm{TP}_{\mathrm T}/(2\,\mathrm{TP}_{\mathrm T}+\mathrm{FP}_{\mathrm T}+\mathrm{FN}_{\mathrm T})\big]$,  
the map $a\mapsto 2a/(2a+b+c)$ is concave in $a\ge 0$ for fixed $b,c\ge 0$.  
Conditioning on $(\mathrm{FP}_{\mathrm T},\mathrm{FN}_{\mathrm T})$ and applying Jensen gives
$F_{\mathrm T}\ge \frac{2\,\mathbb E[\mathrm{TP}_{\mathrm T}]}{2\,\mathbb E[\mathrm{TP}_{\mathrm T}]+\mathbb E[\mathrm{FP}_{\mathrm T}]+\mathbb E[\mathrm{FN}_{\mathrm T}]}
=\widetilde F_{\mathrm T}(\tau)$, so \eqref{eqD:final} also lower-bounds the instance-level Rare--F1 (possibly conservatively).
\hfill$\square$

%======================== Appendix E ========================
\section{Implementation Protocols}
\setcounter{equation}{0}
\renewcommand{\theequation}{E.\arabic{equation}}

\subsection{Implementation details of \textsc{CD--GTMLL}}
\textbf{Backbone and heads.}
All players share a single backbone encoder $\theta_0$ that produces features $\mathbf h(\mathbf x)$.
Each player $i$ owns a lightweight head $\theta_i$ that maps $\mathbf h(\mathbf x)$ to posteriors
$p_i(\mathbf x)=\pi_i(\mathbf x;\theta_i)\in[0,1]^{|\mathcal L_i|}$ (sigmoid layer).
Unless otherwise noted, heads are linear classifiers on $\mathbf h(\mathbf x)$ with bias.

\textbf{Fusion.}
We use label-wise weighted averaging as in \eqref{eq:fusion}. The weights $\omega_{i,\ell}$ are learnable \emph{per label, per player} scalars constrained by a softmax over the active players $\{i:\ell\in\mathcal L_i\}$ so that $\sum_{i:\ell\in\mathcal L_i}\omega_{i,\ell}=1$.
Weights are updated \emph{only} in the optional backbone/fusion step of Alg.~\ref{alg:cdgtmll} (outer loop), keeping the block-gradient identity $\nabla_{\theta_i}\Phi=\nabla_{\theta_i}J_i$ intact.

\textbf{Curiosity.}
We instantiate $C_i$ by the rarity-weighted log-likelihood plus a divergence $D$ between the current player and a \emph{stop-gradient} peer average, as in \eqref{eq:curiosity_smooth}.
We take $D$ as Jensen--Shannon divergence over the \emph{overlap set} $\mathcal O_i=\{\ell\in\mathcal L_i:\exists j\ne i,\ \ell\in\mathcal L_j\}$; non-overlapping coordinates are ignored in $D$.
The peer average $\overline p_{-i}$ is computed over $\{p_j\}_{j\ne i}$ from the same batch, then detached (no gradient) and optionally smoothed by an EMA with decay $0.99$ for stability in practice.

\textbf{Label-space partition and overlap.}
Let $N$ be the number of players and $\rho\in[0,1)$ the overlap ratio. We construct $\{\mathcal L_i\}_{i=1}^N$ on the training set as follows:
(i) sort labels by $\mathrm{freq}(\ell)$ (ascending); (ii) round-robin assign labels to players to balance head/tail counts per player; (iii) add overlap by assigning each tail label to an additional player chosen uniformly at random subject to per-player load balance (target coverage factor $m\approx 1+\rho$, default $\rho=0.2$); (iv) fix the partition for the entire training.

\textbf{Optimization.}
We use Adam with decoupled weight decay. Unless stated, the backbone and heads use separate learning rates $(\eta_0,\eta_{\mathrm{head}})$ with cosine decay; the fusion parameters $\omega$ share $\eta_0$.
Curiosity uses a linear warmup on the disagreement coefficient $\beta$ from $0$ to $\beta_{\max}$ in the first $10\%$ epochs; $\alpha$ is fixed.
Gradient clipping (global norm $5$) is enabled.
Mini-batch size and epochs are given in E.2.

\textbf{Thresholds and calibration.}
Training is purely probabilistic (no hard thresholding).
At evaluation, we report both a global threshold $\tau_\ell\equiv 0.5$ and label-adaptive $\tau_\ell$ tuned on the validation split (fixed at test).

\textbf{Inference.}
One forward through the backbone and all heads; fuse by \eqref{eq:fusion}; threshold to get $\hat{\mathbf y}$.
No test-time calibration or ensembling is used unless stated.

\medskip
\noindent\textbf{Default hyperparameters.}
Unless specified in E.2: $N=3$, overlap $\rho=0.2$, $\alpha=0.4$, $\beta_{\max}=0.3$, EMA decay $0.99$, weight decay $1\mathrm{e}{-4}$, gradient clip $5$.

\subsection{Dataset-specific settings}
\textbf{Conventional MLC (VOC, COCO, Yeast, Mediamill).}
Images (VOC/COCO): backbone ResNet-50 (ImageNet init), feature dim $d'=2048$; heads are linear.
$\eta_0=1\mathrm{e}{-4}$, $\eta_{\mathrm{head}}=3\mathrm{e}{-4}$, batch size $B=64$ (VOC) / $128$ (COCO), epochs $60$ / $40$ with cosine decay; $N=3$.
Tabular/text (Yeast/Mediamill): backbone MLP (2 layers, ReLU, $d'\!=\!512$) with $\eta_0=2\mathrm{e}{-3}$; $B=256$, epochs $100$; $N=3$.
Curiosity: $\alpha=0.4$ (VOC/Yeast), $0.5$ (COCO/Mediamill); $\beta_{\max}=0.3$; JSD on $\mathcal O_i$; $\rho=0.2$.

\textbf{Extreme MLC (Eurlex-4K, Wiki10-31K, AmazonCat-13K).}
Text encoder: TF-IDF ($d'\!=\!200\mathrm{k}$ sparse) + linear heads or a compact Transformer encoder (when permitted) with $d'\!=\!768$.
$N=4$, $\rho=0.15$, $\alpha=0.3$, $\beta_{\max}=0.2$, $B=512$ (sparse) or $128$ (Transformer).
Learning rates: $\eta_0=5\mathrm{e}{-4}$ (sparse) / $2\mathrm{e}{-4}$ (Transformer), $\eta_{\mathrm{head}}=5\mathrm{e}{-4}$, epochs $20$--$30$.
For $\mathrm{P}@k$, label-adaptive thresholds are not used.

\textbf{Ablations.}
No-Curiosity sets $\alpha=0$; Single-Predictor uses $N=1$; `No-Overlap' sets $\rho=0$; disagreement off sets $\beta_{\max}=0$.

\subsection{Construction of rare-focused MLC splits (``R'' variants)}
We programmatically create \emph{training-only} rare-focused variants (VOC-R, COCO-R, Yeast-R, Mediamill-R) by down-sampling \emph{positives} of the least-frequent labels while \emph{preserving} co-occurrences of other labels in the same instance. Validation and test splits remain unchanged.

\paragraph{Inputs}
Training set $\{(\mathbf x_m,\mathbf y_m)\}_{m=1}^{M}$ with multi-labels $\mathbf y_m\in\{0,1\}^{L}$; severity $\sigma\in(0,1)$; tail fraction $q$ (we use $q=0.2$).

\paragraph{Steps}
\begin{enumerate}
\item \textbf{Tail set:} compute label prevalences $\mathrm{freq}(\ell)=\frac{1}{M}\sum_m y_{m\ell}$ \emph{on the original training set}; sort ascending and take the bottom $\lfloor qL\rfloor$ labels as $\mathcal L_{\mathrm T}$ (ties resolved by fixed-seed randomness).
\item \textbf{Positive pools:} for each $\ell\in\mathcal L_{\mathrm T}$, collect its positive index set $\mathcal P_\ell=\{m:\,y_{m\ell}=1\}$ and size $P_\ell$.
\item \textbf{Down-sampling (label-wise, instance-preserving):} for each $\ell\in\mathcal L_{\mathrm T}$, sample a subset $\mathcal S_\ell\subset\mathcal P_\ell$ of size $\lfloor \sigma P_\ell\rfloor$ uniformly without replacement (fixed seed), and \emph{flip only that label} to negative: for all $m\in\mathcal S_\ell$, set $y_{m\ell}\leftarrow 0$ while keeping all other labels $\{y_{mj}:j\ne \ell\}$ unchanged.
\item \textbf{Bookkeeping:} record the achieved down-sampling ratio $\widehat\sigma_\ell=|\mathcal S_\ell|/P_\ell$ and report the new training prevalences. No instance is removed; we only edit selected label entries.
\end{enumerate}

\paragraph{Severity levels}
We use dataset-specific severities to match the main tables:
VOC-R: $\sigma\in\{0.30,0.50\}$; COCO-R: $\sigma\in\{0.30,0.40\}$; Yeast-R: $\sigma\in\{0.40,0.50\}$; Mediamill-R: $\sigma\in\{0.40,0.50\}$.
Results are averaged over 3 seeds for the sampling step.

\subsection{Metrics: formal definitions}
\paragraph{Setup}
Let the test set be $\{(\mathbf x_m,\mathbf y_m)\}_{m=1}^{M}$ with $\mathbf y_m\in\{0,1\}^{L}$, and let $\hat{\mathbf p}_m\in[0,1]^L$ be predicted label probabilities.
Write $y_{m\ell}\in\{0,1\}$, $\hat p_{m\ell}\in[0,1]$, and $\hat y_{m\ell}(\tau_\ell)=\mathbf 1\{\hat p_{m\ell}\ge \tau_\ell\}$ for a (global or per-label) threshold $\tau_\ell\in(0,1)$.
For a label $\ell$, denote the set of its positives by $\mathcal P_\ell=\{m:\,y_{m\ell}=1\}$ and its count by $P_\ell=|\mathcal P_\ell|$.

\paragraph{Micro- and Macro-F1}
Define global (micro) counts
\begin{equation}
\label{eqE:micro_counts}
\mathrm{TP}_{\text{micro}}=\sum_{m,\ell} y_{m\ell}\hat y_{m\ell},\quad
\mathrm{FP}_{\text{micro}}=\sum_{m,\ell} (1-y_{m\ell})\hat y_{m\ell},\quad
\mathrm{FN}_{\text{micro}}=\sum_{m,\ell} y_{m\ell}(1-\hat y_{m\ell}).
\end{equation}
Then
\begin{equation}
\label{eqE:micro_f1}
\begin{aligned}
\mathrm{Prec}_{\text{micro}} &= \frac{\mathrm{TP}_{\text{micro}}}{\mathrm{TP}_{\text{micro}}+\mathrm{FP}_{\text{micro}}}, \\
\mathrm{Rec}_{\text{micro}} &= \frac{\mathrm{TP}_{\text{micro}}}{\mathrm{TP}_{\text{micro}}+\mathrm{FN}_{\text{micro}}}, \\
\mathrm{F1}_{\text{micro}} &= \frac{2\,\mathrm{Prec}_{\text{micro}}\mathrm{Rec}_{\text{micro}}}{\mathrm{Prec}_{\text{micro}}+\mathrm{Rec}_{\text{micro}}}.
\end{aligned}
\end{equation}
Per-label counts are
\begin{equation}
\label{eqE:perlabel_counts}
\mathrm{TP}_\ell=\sum_m y_{m\ell}\hat y_{m\ell},\quad
\mathrm{FP}_\ell=\sum_m (1-y_{m\ell})\hat y_{m\ell},\quad
\mathrm{FN}_\ell=\sum_m y_{m\ell}(1-\hat y_{m\ell}),
\end{equation}
\begin{equation}
\label{eqE:perlabel_f1}
\mathrm{Prec}_\ell=\frac{\mathrm{TP}_\ell}{\mathrm{TP}_\ell+\mathrm{FP}_\ell},\quad
\mathrm{Rec}_\ell=\frac{\mathrm{TP}_\ell}{\mathrm{TP}_\ell+\mathrm{FN}_\ell},\quad
\mathrm{F1}_\ell=\frac{2\,\mathrm{Prec}_\ell\mathrm{Rec}_\ell}{\mathrm{Prec}_\ell+\mathrm{Rec}_\ell},
\end{equation}
and the macro-F1 is
\begin{equation}
\label{eqE:macro_f1}
\mathrm{F1}_{\text{macro}}=\frac{1}{L}\sum_{\ell=1}^{L}\mathrm{F1}_\ell.
\end{equation}

\paragraph{Rare-F1 (tail macro-F1)}
Let label prevalences $\mathrm{freq}(\ell)=\frac{1}{M}\sum_{m} y_{m\ell}$ be computed on the \emph{training} set.  
Let $\mathcal L_{\mathrm T}\subset\{1,\ldots,L\}$ be the tail set formed by the bottom $20\%$ labels in $\mathrm{freq}(\ell)$ (ties resolved as described in E.2).  
Then
\begin{equation}
\label{eqE:rare_macro}
\mathrm{Rare\text{-}F1}\;\triangleq\;\frac{1}{|\mathcal L_{\mathrm T}|}\sum_{\ell\in\mathcal L_{\mathrm T}}\mathrm{F1}_\ell.
\end{equation}
(Unless otherwise stated, Rare-F1 denotes the \emph{macro} average on the tail subset.)

\paragraph{Mean Average Precision (mAP)}
For each label $\ell$, rank the $M$ instances by $\hat p_{m\ell}$ in descending order, obtaining a permutation $m_{(1)},\ldots,m_{(M)}$.  
Let $P_\ell=|\mathcal P_\ell|$. The average precision (all-points interpolation) is
\begin{equation}
\label{eqE:ap_label}
\mathrm{AP}_\ell=\frac{1}{P_\ell}\sum_{t=1}^{M} \mathrm{Prec}_\ell(t)\cdot \mathbf 1\{y_{m_{(t)}\ell}=1\},
\quad
\mathrm{Prec}_\ell(t)=\frac{1}{t}\sum_{u=1}^{t}\mathbf 1\{y_{m_{(u)}\ell}=1\}.
\end{equation}
Labels with $P_\ell=0$ on the test split are ignored in the mean. The mAP is
\begin{equation}
\label{eqE:map}
\mathrm{mAP}=\frac{1}{|\{\ell:P_\ell>0\}|}\sum_{\ell:P_\ell>0} \mathrm{AP}_\ell.
\end{equation}

\paragraph{Extreme MLC: $\mathrm{P}@k$}
For instance $m$, let $\hat S_m^{(k)}$ be the top-$k$ labels by $\hat p_{m\ell}$.  
Then
\begin{equation}
\label{eqE:pak}
\mathrm{P}@k=\frac{1}{M}\sum_{m=1}^{M}\frac{|\hat S_m^{(k)}\cap \{\,\ell:\,y_{m\ell}=1\,\}|}{k},
\qquad k\in\{1,3,5\}.
\end{equation}

\end{document}